\DeclareTextSymbolDefault{\textquotedbl}{T1}
\providecommand{\tabularnewline}{\\}
\providecommand{\algorithmname}{Algorithm}
\theoremstyle{plain}
\newtheorem{thm}{\protect\theoremname}
\theoremstyle{plain}
\newtheorem{prop}[thm]{\protect\propositionname}
\renewcommand{\cite}{\citep}
\providecommand{\propositionname}{Proposition}
\providecommand{\theoremname}{Theorem}
\title{Beyond Surprise: Improving Exploration Through Surprise Novelty}
\author{Hung Le}
\affiliation{
  \institution{Applied AI Institute, Deakin University}
  \city{Geelong}
  \country{Australia}}
\email{thai.le@deakin.edu.au}
\author{Kien Do}
\affiliation{
  \institution{Applied AI Institute, Deakin University}
  \city{Geelong}
  \country{Australia}}
\email{k.do@deakin.edu.au}
\author{Dung Nguyen}
\affiliation{
  \institution{Applied AI Institute, Deakin University}
  \city{Geelong}
  \country{Australia}}
\email{dung.nguyen@deakin.edu.au}
\author{Svetha Venkatesh}
\affiliation{
  \institution{Applied AI Institute, Deakin University}
  \city{Geelong}
  \country{Australia}}
\email{svetha.venkatesh@deakin.edu.au}
\begin{abstract}
We present a new computing model for intrinsic rewards in reinforcement
learning that addresses the limitations of existing surprise-driven
explorations. The reward is the\emph{ novelty of the surprise} rather
than the surprise norm. We estimate the surprise novelty as retrieval
errors of a memory network wherein the memory stores and reconstructs
surprises. Our surprise memory (SM) augments the capability of surprise-based
intrinsic motivators, maintaining the agent's interest in exciting
exploration while reducing unwanted attraction to unpredictable or
noisy observations. Our experiments demonstrate that the SM combined
with various surprise predictors exhibits efficient exploring behaviors
and significantly boosts the final performance in sparse reward environments,
including Noisy-TV, navigation and challenging Atari games. 
\end{abstract}
\keywords{Reinforcement Learning; Exploration; Intrinsic Motivation; Memory}
\gdef\@copyrightpermission{
	\begin{minipage}{0.3\columnwidth}
		\href{https://creativecommons.org/licenses/by/4.0/}{\includegraphics[width=0.90\textwidth]{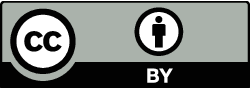}}
	\end{minipage}\hfill
	\begin{minipage}{0.7\columnwidth}
		\href{https://creativecommons.org/licenses/by/4.0/}{This work is licensed under a Creative Commons Attribution International 4.0 License.}
	\end{minipage}
	\vspace{5pt}
}
\begin{document}\sloppy
\pagestyle{fancy}
\fancyhead{}
\maketitle

\section{Introduction}

\emph{What motivates agents to explore?} Successfully answering this
question would enable agents to learn efficiently in formidable tasks.
Random explorations such as $\epsilon$-greedy are inefficient in
high dimensional cases, failing to learn despite training for hundreds
of million steps in sparse reward games \cite{bellemare2016unifying}.
Alternative approaches propose to use intrinsic motivation to aid
exploration by adding bonuses to the environment's rewards \cite{bellemare2016unifying,stadie2015incentivizing}.
The intrinsic reward is often proportional to the novelty of the visiting
state: it is high if the state is novel (e.g. different from the past
ones \cite{badia2020agent57,badia2019never}) or less frequently visited
\cite{bellemare2016unifying,tang2017exploration}. 

Another view of intrinsic motivation is from surprise, which refers
to the result of the experience being unexpected, and is determined
by the discrepancy between the expectation (from the agent\textquoteright s
prediction) and observed reality \cite{barto2013novelty,schmidhuber2010formal}.
Technically, surprise is the difference between prediction and observation
representation vectors. The norm of the residual (i.e. prediction
error) is used as the intrinsic reward. Here, we will use the terms
``surprise'' and ``surprise norm'' to refer to the residual vector
and its norm, respectively.  Recent works have estimated surprise
with various predictive models such as dynamics \cite{stadie2015incentivizing},
episodic reachability \cite{savinov2018episodic} and inverse dynamics
\cite{pathak2017curiosity}; and achieved significant improvements
with surprise norm \cite{burda2018large}. However, surprise-based
agents tend to be overly curious about noisy or unpredictable observations
\cite{itti2005bayesian,schmidhuber1991curious}. For example, consider
an agent watching a television screen showing white noise (noisy-TV
problem). The TV is boring, yet the agent cannot predict the screen's
content and will be attracted to the TV due to its high surprise norm.
This distraction or \textquotedbl fake surprise\textquotedbl{} is
common in partially observable Markov Decision Process (POMDP), including
navigation tasks and Atari games \cite{burda2018exploration}. Many
works have addressed this issue by relying on the learning progress
\cite{achiam2017surprise,schmidhuber1991curious} or random network
distillation (RND) \cite{burda2018exploration}. However, the former
is computationally expensive, and the latter requires many samples
to perform well. 

\begin{figure*}[t]
\begin{centering}
\includegraphics[width=0.8\linewidth]{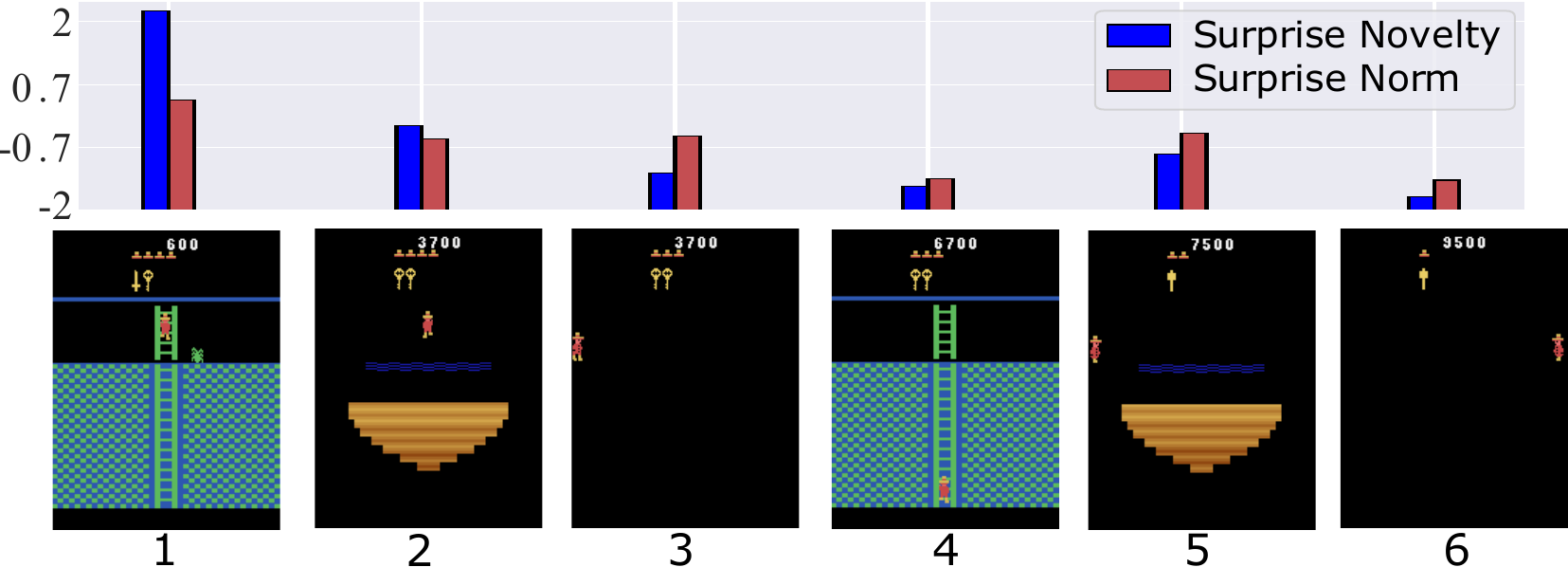}
\par\end{centering}
\caption{Montezuma Revenge: \emph{surprise novelty }better reflects the originality
of the environment than \emph{surprise norm}. While surprise norm
can be significant even for dull events such as those in the dark
room due to unpredictability, surprise novelty tends to be less ($3^{rd}$
and $6^{th}$ image). On the other hand, surprise novelty can be higher
in truly vivid states on the first visit to the ladder and island
rooms ($1^{st}$ and $2^{nd}$ image) and reduced on the second visit
($4^{th}$ and $5^{th}$ image). Here, surprise novelty and surprise
norm are quantified and averaged over steps in each room. \label{fig:Montezuma-Revenge:-surprise}}
\end{figure*}
This paper overcomes the \textquotedbl fake surprise\textquotedbl{}
issue by using \emph{surprise novelty} - a new concept that measures
the uniqueness of surprise. To identify surprise novelty, the agent
needs to compare the current surprise with surprises in past encounters.
One way to do this is to equip the agent with some kind of associative
memory, which we implement as an autoencoder whose task is to reconstruct
a query surprise. The lower the reconstruction error, the lower the
surprise novelty. A further mechanism is needed to deal with the rapid
changes in surprise structure within an episode. As an example, if
the agent meets the same surprise at two time steps, its surprise
novelty should decline, and with a simple autoencoder this will not
happen. To remedy this, we add an episodic memory, which stores intra-episode
surprises. Given the current surprise, this memory can retrieve similar
\textquotedblleft surprises\textquotedblright{} presented earlier
in the episode through an attention mechanism. These surprises act
as a context added to the query to help the autoencoder better recognize
whether the query surprise has been encountered in the episode or
not. The error between the query and the autoencoder's output is defined
as \emph{surprise novelty}, to which the intrinsic reward is set proportionally. 

We argue that using surprise novelty as an intrinsic reward is better
than surprise norm. As in POMDPs, surprise norms can be very large
since the agent cannot predict its environment perfectly, yet there
may exist patterns of prediction failure. If the agent can remember
these patterns, it will not feel surprised when similar prediction
errors appear regardless of the surprise norms. An important emergent
property of this architecture is that when random observations are
presented (e.g., white noise in the noisy-TV problem), the autoencoder
can act as an identity transformation operator, thus effectively \textquotedblleft passing
the noise through\textquotedblright{} to reconstruct it with low error.
We conjecture that the autoencoder is able to do this with the surprise
rather than the observation as the surprise space has lower variance,
and we show this in our paper. Since our memory system works on the
surprise level, we need to adopt current intrinsic motivation methods
to generate surprises. The surprise generator (SG) can be of any kind
based on predictive models mentioned earlier and is jointly trained
with the memory to optimize its own loss function. To train the surprise
memory (SM), we optimize the memory's parameters to minimize the reconstruction
error.

Our contribution is two-fold: 
\begin{itemize}
\item We propose a new concept of surprise novelty for intrinsic motivation.
We argue that it reflects better the environment originality than
surprise norm (see motivating graphics Fig. \ref{fig:Montezuma-Revenge:-surprise}). 
\item We design a novel memory system, named Surprise Memory (SM) that consists
of an autoencoder associative memory and an attention-based episodic
memory. Our two-memory system estimates surprise novelty within and
across episodes.
\end{itemize}
In our experiments, the SM helps RND \cite{burda2018exploration}
perform well in our challenging noisy-TV problem while RND alone performs
poorly. Not only with RND, we consistently demonstrate significant
performance gain when coupling three different SGs with our SM in
sparse-reward tasks. Finally, in hard exploration Atari games, we
boost the scores of 2 strong SGs, resulting in better performance
under the low-sample regime.

\section{Methods}

\begin{figure*}
\begin{centering}
\includegraphics[width=0.9\linewidth]{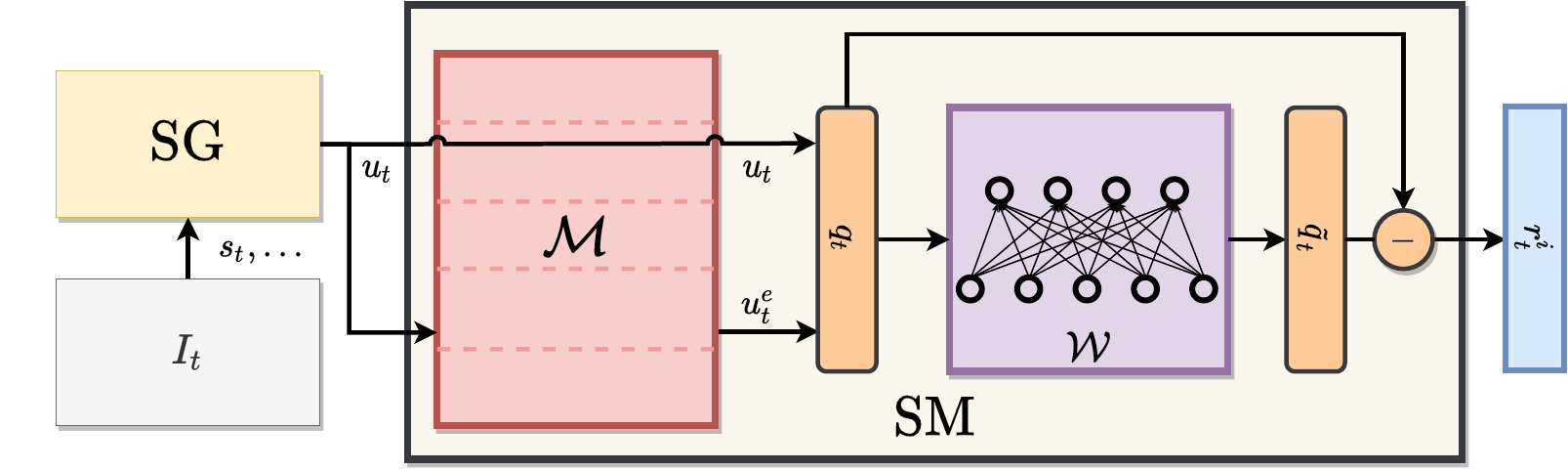}
\par\end{centering}
\caption{Surprise Generator+Surprise Memory (SG+SM). The SG takes input $I_{t}$
from the environment to estimate the surprise $u_{t}$ at state $s_{t}$.
The SM consists of two modules: an episodic memory ($\mathcal{M}$)
and an autoencoder network ($\mathcal{W}$). $\mathcal{M}$ is slot-based,
storing past surprises within the episode. At timestep $t$, given
surprise $u_{t}$, $\mathcal{M}$ retrieves read-out $u_{t}^{e}$
to form a query surprise $q_{t}=\left[u_{t}^{e},u_{t}\right]$ to
$\mathcal{W}$. $\mathcal{W}$  reconstructs the query and
takes the reconstruction error (surprise novelty) as the intrinsic
reward $r_{t}^{i}$. \label{fig:Surprise-Generator-Surprise-Memo}}
\end{figure*}

\subsection{Surprise Novelty}

Surprise is the difference between expectation and observation \cite{ekman1994nature}.
If a surprise repeats, it is no longer a surprise. Based on this intuition,
we hypothesize that surprises can be characterized by their novelties,
and an agent's curiosity is driven by the surprise novelty rather
than the surprising magnitude. Moreover, surprise novelty should be
robust against noises: it is small even for random observations. For
example, watching a random-channel TV can always be full of surprises
as we cannot expect which channel will appear next. However, the agent
should soon find it boring since the surprise of random noises reoccurs
repeatedly, and the channels are entirely unpredictable. 

We propose using a memory-augmented neural network (MANN) to measure
surprise novelty. The memory remembers past surprise patterns, and
if a surprise can be retrieved from the memory, it is not novel, and
the intrinsic motivation should be small. The memory can also be viewed
as a reconstruction network. The network can pass its inputs through
for random, pattern-free surprises, making them retrievable. Surprise
novelty has an interesting property: if some event is unsurprising
(the expectation-reality residual is $\overrightarrow{0}$), its surprise
($\overrightarrow{0}$ with norm $0$) is always perfectly retrievable
(surprise novelty is $0$). In other words, low surprise norm means
low surprise novelty. On the contrary, high surprise norm can have
little surprise novelty as long as the surprise can be retrieved from
the memory either through associative recall or pass-through mechanism.
Another property is that the variance of surprise is generally lower
than that of observation (state), potentially making the learning
on surprise space easier. This property is formally stated as follows.
\begin{prop}
Let $X$ and $U$ be random variables representing the observation
and surprise at the same timestep, respectively. Under an imperfect
SG, the following inequality holds: 

\[
\forall i:\,\,\left(\sigma_{i}^{X}\right)^{2}\geq\left(\sigma_{i}^{U}\right)^{2}
\]
where $\left(\sigma_{i}^{X}\right)^{2}$ and $\left(\sigma_{i}^{U}\right)^{2}$denote
the $i$-th diagonal elements of $\mathrm{{var}}(X)$ and $\mathrm{{var}}(U),$
respectively.
\end{prop}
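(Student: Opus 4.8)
The plan is to write the surprise as a prediction residual and decompose the observation variance accordingly. Let $\hat{X}$ be the SG's prediction of $X$ — where $X$ is understood as the representation the SG actually predicts (the state embedding, the random‑target output in RND, the next‑state feature, and so on) — so that, up to an irrelevant sign, the surprise is $U = X - \hat{X}$ and hence $X_i = \hat{X}_i + U_i$ in each coordinate. I would formalize ``imperfect SG'' as: the predictor was obtained by minimizing expected prediction error over a model class that contains the constants (in the extreme, the unrestricted class, so that $\hat{X}_i = \mathrm{E}[X_i \mid C]$ for the context $C$ the SG conditions on). The only consequence I actually use is that, per coordinate, the residual is mean‑zero and uncorrelated with the prediction: $\mathrm{E}[U_i] = 0$ and $\mathrm{Cov}(\hat{X}_i, U_i) = 0$. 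The word ``imperfect'' (versus ``perfect'') only excludes the degenerate case $U \equiv \overrightarrow{0}$; it is not what makes the inequality go through.

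Granting this, the proof is a single expansion. For each coordinate $i$,
\[
\big(\sigma_i^{X}\big)^2 = \mathrm{var}(\hat{X}_i + U_i) = \mathrm{var}(\hat{X}_i) + \big(\sigma_i^{U}\big)^2 + 2\,\mathrm{Cov}(\hat{X}_i, U_i) = \mathrm{var}(\hat{X}_i) + \big(\sigma_i^{U}\big)^2 \ge \big(\sigma_i^{U}\big)^2,
\]
because a variance is nonnegative; equality in coordinate $i$ means $\mathrm{var}(\hat{X}_i) = 0$, i.e. the SG's prediction is constant (uninformative) there. Read as a law of total variance with the unrestricted predictor, this is $\mathrm{var}(X_i) = \mathrm{E}[\mathrm{var}(X_i \mid C)] + \mathrm{var}(\mathrm{E}[X_i \mid C])$ together with $\mathrm{var}(U_i) = \mathrm{E}[\mathrm{var}(X_i \mid C)]$, so the gap is exactly $\mathrm{var}(\mathrm{E}[X_i \mid C]) \ge 0$. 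Under that interpretation the identity also holds at the matrix level, so $\mathrm{var}(X) - \mathrm{var}(U)$ equals a covariance matrix and is positive semidefinite; taking its $i$‑th diagonal entry (diagonal entries of a PSD matrix are nonnegative) recovers the statement for every $i$.

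The obstacle is located entirely in the hypothesis rather than the computation: the inequality is false for a genuinely arbitrary imperfect predictor. For example, a predictor that returns a scaled target $\hat{X} = \lambda X$ gives $\mathrm{var}(U_i) = (\lambda - 1)^2\,\mathrm{var}(X_i)$, which exceeds $\mathrm{var}(X_i)$ as soon as $|\lambda - 1| > 1$. So the argument must commit to an SG that is at least ``calibrated'' — mean‑squared‑error optimal over its model class, or at minimum one whose residual carries no linear information about its own output — and the proof should state this assumption up front. One further point deserves a sentence: one must be explicit that $X$ is the quantity the SG predicts, so that $X$ and $U$ share a coordinate system and the per‑coordinate comparison is meaningful; if $X$ were instead the raw observation while $U$ lived in a learned feature space, the index $i$ would not refer to the same thing on the two sides and the claim would have to be reinterpreted.
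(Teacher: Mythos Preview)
Your proof is correct and essentially identical to the paper's: both model the SG's output as a conditional expectation $\hat{X} = \mathbb{E}[X \mid C]$ (the paper writes $Z = \mathbb{E}[X \mid Y]$) and then use the uncorrelatedness of the residual with the prediction to obtain the variance decomposition $\mathrm{var}(X_i) = \mathrm{var}(\hat{X}_i) + \mathrm{var}(U_i)$, which is exactly the law of total variance you invoke. You are more explicit than the paper about the calibration assumption (and supply a counterexample showing the claim fails for an arbitrary predictor), but the computation itself is the same.
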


\begin{proof}
See Appendix \ref{subsec:Variance-Inequality}. 
\end{proof}

\subsection{Surprise Generator}

Since our MANN requires surprises for its operation, it is built upon
a prediction model, which will be referred to as Surprise Generators
(SG). In this paper, we adopt many well-known SGs (e.g. RND \cite{burda2018exploration}
and ICM \cite{pathak2017curiosity}) to predict the observation, compute
the surprise $u_{t}$ for every step in the environment.
The surprise norm is the Euclidean distance between the expectation
and the reality:

\begin{equation}
\left\Vert u_{t}\right\Vert =\left\Vert SG\left(I_{t}\right)-O_{t}\right\Vert \label{eq:ut}
\end{equation}
where $u_{t}\in\mathbb{R}^{n}$ is the surprise vector of size $n$,
$I_{t}$ the input of the SG at step $t$ of the episode, $SG\left(I_{t}\right)$
and $O_{t}$ the SG's prediction and the observation target, respectively.
The input $I_{t}$ is specific to the SG architecture choice, which
can be the current ($s_{t}$) or previous state, action ($s_{t-1},a_{t}$).
The observation target $O_{t}$ is usually a transformation (can be
identical or random) of the current state $s_{t}$, which serves as
the target for the SG's prediction. The SG is usually trained to minimize:
\begin{equation}
\mathcal{L}_{SG}=\mathbb{E}_{t}\left[\left\Vert u_{t}\right\Vert \right]\label{eq:surlss}
\end{equation}
Here, predictable observations have minor prediction errors or little
surprise. One issue is that a great surprise norm can be simply due
to noisy or distractive observations. Next, we propose a remedy for
this problem.

\subsection{Surprise Memory \label{subsec:Surprise-Memory}}

The surprise generated by the SG is stored and processed by a memory
network dubbed Surprise Memory (SM). It consists of an episodic memory
$\mathcal{M}$ and an\emph{ }autoencoder network $\mathcal{W}$, jointly
optimized to reconstruct any surprise. At each timestep, the SM receives
a surprise $u_{t}$ from the SG module and reads content $u_{t}^{e}$
from the memory $\mathcal{M}$.\emph{ }$\left\{ u_{t}^{e},u_{t}\right\} $
forms a surprise query $q_{t}$ to $\mathcal{W}$ to retrieve the
reconstructed $\tilde{q}_{t}$. This reconstruction will be used to
estimate the novelty of surprises forming intrinsic rewards $r_{t}^{i}$.
Fig. \ref{fig:Surprise-Generator-Surprise-Memo} summarizes the operations
of the components of our proposed method. Our 2 memory design effectively
recovers surprise novelty by handling intra and inter-episode surprise
patterns thanks to $\mathcal{M}$ and $\mathcal{W}$, respectively.
$\mathcal{M}$ can quickly adapt and recall surprises that occur within
an episode. $\mathcal{W}$ is slower and focuses more on consistent
surprise patterns across episodes during training.

Here the query $q_{t}$ can be directly set to the surprise $u_{t}$.
However, this ignores the rapid change in surprise within an episode.
Without $\mathcal{M}$, when the SG and $\mathcal{W}$ are fixed (during
interaction with environments), their outputs $u_{t}$ and $\tilde{q}_{t}$
stay the same for the same input $I_{t}$. Hence, the intrinsic reward
$r_{t}^{i}$ also stays the same. It is undesirable since when the
agent observes the same input at different timesteps (e.g., $I_{1}=I_{2}$),
we expect its curiosity should decrease in the second visit ($r_{1}^{i}<$$r_{2}^{i}$).
Therefore, we design SM with $\mathcal{M}$ to fix this issue. 

\textbf{The episodic memory $\mathcal{M}$} stores representations
of surprises that the agent encounters during an episode. For simplicity,
$\mathcal{M}$ is implemented as a first-in-first-out queue whose
size is fixed as $N$. Notably, the content of $\mathcal{M}$ is
wiped out at the end of each episode. Its information is limited to
a single episode. $\mathcal{M}$ can be viewed as a matrix: $\mathcal{M}\in\mathbb{R}^{N\times d},$
where $d$ is the size of the memory slot. We denote $\mathcal{M}\left(j\right)$
as the $j$-th row in the memory, corresponding to the surprise $u_{t-j}$.
To retrieve from $\mathcal{M}$ a read-out $u_{t}^{e}$ that is close
to $u_{t}$, we perform content-based attention \cite{graves2014neural}
to compute the attention weight as $w_{t}\left(j\right)=\frac{\left(u_{t}Q\right)\mathcal{M}\left(j\right)^{\top}}{\left\Vert \left(u_{t}Q\right)\right\Vert \left\Vert \mathcal{M}\left(j\right)\right\Vert }$.
The read-out from $\mathcal{M}$ is then $u_{t}^{e}=w_{t}\mathcal{M}V\in\mathbb{R}^{n}$.
Here, $Q\in\mathbb{R}^{n\times d}$ and $V\in\mathbb{R}^{d\times n}$
are learnable weights mapping between the surprise and the memory
space. To force the read-out close to $u_{t}$, we minimize: 
\begin{equation}
\mathcal{L_{M}}=\mathbb{E}_{t}\left[\left\Vert u_{t}^{e}-u_{t}\right\Vert \right]\label{eq:lm}
\end{equation}

The read-out and the SG's surprise form the query surprise to $\mathcal{W}$:
$q_{t}=\left[u_{t}^{e},u_{t}\right]\in\mathbb{R}^{2n}$. $\mathcal{M}$
stores intra-episode surprises to assist the autoencoder in preventing
the agent from exploring ``fake surprise'' within the episode. Our
episodic memory formulation is unlike prior KNN-based episodic memory
\cite{badia2019never} because our system learns to enforce the memory
retrieval error to be small. This is critical when the representations
stored in $\mathcal{M}$ can not be easily discriminated if we only
rely on unsupervised distance-based metrics. More importantly, under
our formulation, the retrieval error can still be small even when
the stored items in the memory differ from the query. This helps detect
the ``fake surprise'' when the query surprise seems unlike those
in the memory if considered individually, yet can be approximated
as weighted sum of the store surprises. 

In general, since we optimize the parameters to reconstruct $u_{t}$
using past surprises in the episode, if the agent visits a state whose
surprise is predictable from those in $\mathcal{M}$, $\left\Vert u_{t}^{e}-u_{t}\right\Vert $
should be small. Hence, the read-out context $u_{t}^{e}$ contains
no extra information than $u_{t}$ and reconstructing $q_{t}$ from
$\mathcal{W}$ becomes easier as it is equivalent to reconstructing
$u_{t}$. In contrast, visiting diverse states leads to a more novel
read-out $u_{t}^{e}$ and makes it more challenging to reconstruct
$q_{t}$, generally leading to higher intrinsic reward. 

\textbf{The autoencoder network $\mathcal{W}$} can be viewed as an
associative memory of surprises that persist across episodes. At timestep
$t$ in any episode during training, $\mathcal{W}$ is queried with
$q_{t}$ to produce a reconstructed memory $\tilde{q}_{t}$. The surprise
novelty is determined as:

\begin{equation}
r_{t}^{i}=\left\Vert \tilde{q}_{t}-q_{t}\right\Vert 
\end{equation}
which is the norm of the surprise residual $\tilde{q}_{t}-q_{t}$.
It will be normalized and added to the external reward as an intrinsic
reward bonus. The details of computing and using normalized intrinsic
rewards can be found in Appendix \ref{subsec:Intrinsic-Reward-Integration}.

We implement $\mathcal{W}$ as a feed-forward neural network that
learns to reconstruct its own inputs. The query surprise is encoded
to the weights of the network via backpropagation as we minimize the
reconstruction loss below:

\begin{equation}
\mathcal{L_{\mathcal{W}}}=\mathbb{E}_{t}\left[r_{t}^{i}\right]=\mathbb{E}_{t}\left[\left\Vert \mathcal{W}\left(q_{t}\right)-q_{t}\right\Vert \right]\label{eq:recloss}
\end{equation}
Here, $\tilde{q}_{t}=\mathcal{W}\left(q_{t}\right)$. Intuitively,
it is easier to retrieve non-novel surprises experienced many times
in past episodes. Thus, the intrinsic reward is lower for states that
leads to these familiar surprises. On the contrary, rare surprises
are harder to retrieve, which results in high reconstruction errors
and intrinsic rewards. We note that autoencoder (AE) has been shown
to be equivalent to an associative memory that supports memory encoding
and retrieval through attractor dynamics \cite{radhakrishnan2020overparameterized}.
Unlike slot-based memories, AE has a fixed memory capacity, compresses
information and learns data representations. We could store the surprise
in a slot-based memory across episodes, but the size of this memory
would be autonomous, and the data would be stored redundantly. Hence,
the quality of the stored surprise will reduce as more and more observations
come in. On the other hand, AE can efficiently compress surprises
to latent representations and hold them to its neural weights, and
the surprise retrieval is optimized. Besides, AE can learn to switch
between 2 mechanisms: pass-through and pattern retrieval, to optimally
achieve its objective. We cannot do that with slot-based memory. Readers
can refer to Appendix \ref{subsec:-as-Associative} to see the architecture
details and how $\mathcal{W}$ can be interpreted as implementing
associative memory. 

The whole system SG+SM is trained end-to-end by minimizing the following
loss: $\mathcal{L}=\mathcal{L}_{SG}+\mathcal{L_{M}}+\mathcal{L_{W}}$.
Here, we block the gradients from $\mathcal{L_{W}}$ backpropagated
to the parameters of SG to avoid trivial reconstructions of $q_{t}$.
The pseudocode of our algorithm is presented in Algo. \ref{alg:ir} and \ref{alg:ir_full}.
We note that vector notations in the algorithm are row vectors. For
simplicity, the algorithm assumes 1 actor. In practice, our algorithm
works with multiple actors and mini-batch training. See Appendix \ref{subsec:Intrinsic-Reward-Integration} for explaination of $\beta$ and $r^{std}$.

\begin{algorithm}[t]
\begin{algorithmic}[1]
\REQUIRE{$u_t$, and our surprise memory $\mathrm{SM}$ consisting of a slot-based memory $\mathcal{M}$, parameters $Q$, $V$,  and a neural network $\mathcal{W}$}
\STATE{Compute $\mathcal{L}_{SG}=\left\Vert u_{t}\right\Vert$}
\STATE{Query $\mathcal{M}$ with $u_t$, retrieve $u_{t}^{e}=w_{t}\mathcal{M}V$ where $w_t$ is the attention weight}
\STATE{Compute $\mathcal{L_{M}}=\Vert u_{t}^{e}-u_{t}.detach()\Vert$}
\STATE{Query $\mathcal{W}$ with $q_t=[u_{t}^{e},u_t]$, retrieve $\tilde{q}_{t}=\mathcal{W}(q_t)$}
\STATE{Compute intrinsic reward $r^i_t=L_{\mathcal{W}}=\left\Vert \tilde{q}_{t}-q_{t}.detach()\right\Vert$}
\RETURN{$\mathcal{L}_{SG}$, $\mathcal{L_{M}}$, $L_{\mathcal{W}}$ }
\end{algorithmic}

\caption{Intrinsic rewards computing via SG+SM framework.\label{alg:ir}}
\end{algorithm}
\begin{algorithm}[t]
\begin{algorithmic}[1]
\REQUIRE{buffer, policy $\pi_{\theta}$, surprise-based predictor $\mathrm{SG}$, and our surprise memory $\mathrm{SM}$ consisting of a slot-based memory $\mathcal{M}$, parameters $Q$, $V$, and a neural network $\mathcal{W}$}
\STATE{Initialize $\pi_{\theta}$, $\mathrm{SG}$, $Q$, $\mathcal{W}$}
\FOR{$iteration=1,2,...$}
\FOR{$t=1,2,...T$} 
\STATE{Execute policy $\pi_{\theta}$ to collect $s_t$, $a_t$, $r_t$, forming input $I_t={s_t, ...}$ and target $O_t$}
\STATE{Compute surprise $u_t=SG\left(I_{t}\right)-O_t.detach()$ (Eq. $\ref{eq:ut}$)}
\STATE{Compute intrinsic reward $r^i_t$ using Algo. $\ref{alg:ir}$}
\STATE{Compute final reward $r_t \leftarrow r_t+\beta r^i_t/r^{std}_t$}
\STATE{Add $(I_t,O_t,s_{t-1},s_t,a_t,r_t)$ to buffer}
\STATE{Add $u_tQ$ to $\mathcal{M}$}
\STATE{$\mathbf{if}$ done episode $\mathbf{then}$ clear $\mathcal{M}$}
\ENDFOR
\FOR{$k=1,2,..,K$} 
\STATE{Sample $I_t$, $O_t$ from buffer}
\STATE{Compute surprise $u_t=SG\left(I_{t}\right)-O_t.detach()$ (Eq. $\ref{eq:ut}$)}
\STATE{Compute $\mathcal{L}_{SG}$, $\mathcal{L_{M}}$, $L_{\mathcal{W}}$ using Algo. $\ref{alg:ir}$}
\STATE{Update $\mathrm{SG}$, $Q$ and $\mathcal{W}$ by minimizing the loss $\mathcal{L}=\mathcal{L}_{SG}+\mathcal{L_{M}}+\mathcal{L_{W}}$}
\STATE{Update $\pi_{\theta}$ with sample $(s_{t-1},s_t,a_t,r_t)$ from buffer using backbone algorithms}
\ENDFOR
\ENDFOR
\end{algorithmic}

\caption{Jointly training SG+SM and the policy. \label{alg:ir_full}}
\end{algorithm}

\section{Experimental Results}

\begin{figure*}
\begin{centering}
\includegraphics[width=1\linewidth]{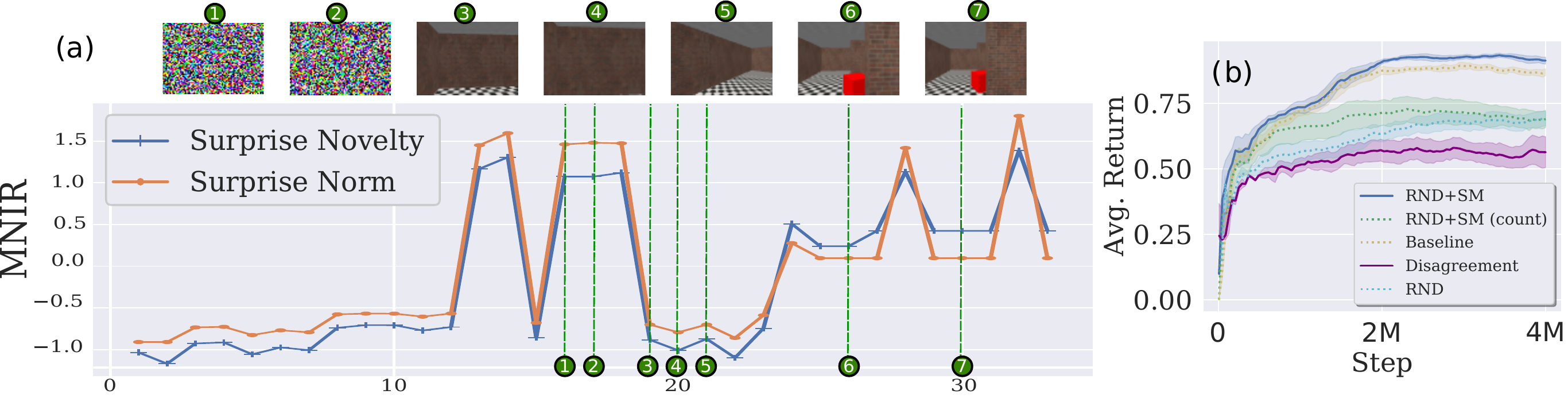}
\par\end{centering}
\caption{Noisy-TV: (a) mean-normalized intrinsic reward (MNIR) produced by
RND and RND+SM at 7 selected steps in an episode. (b) Average task
return (mean$\pm$std. over 5 runs) over 4 million training steps.\label{fig:ntv_view}}
\end{figure*}

\subsection{Noisy-TV: Robustness against Noisy Observations}

We use Noisy-TV, an environment designed to fool exploration methods
\cite{burda2018exploration,savinov2018episodic}, to confirm that
our method can generate intrinsic rewards that (1) are more robust
to noises and (2) can discriminate rare and common observations through
surprise novelty. We simulate this problem by employing a 3D maze
environment with a random map structure. The TV is not fixed in specific
locations in the maze to make it more challenging. Instead, the agent
\textquotedblleft brings\textquotedblright{} the TV with it and can
choose to watch TV anytime. Hence, there are three basic actions (turn
left, right, and move forward) plus an action: watch TV. When taking
this action, the agent will see a white noise image sampled from standard
normal distribution and thus, the number of TV channels can be considered
infinity. The agent's state is an image of its viewport, and its goal
is to search for a red box randomly placed in the maze (+1 reward
if the agent reaches the goal). 

The main baseline is RND \cite{burda2018exploration},
a simple yet strong SG that is claimed to obviate the stochastic problems
of Noisy-TV. Our SG+SM model uses RND as the SG, so we name it RND+SM. 
The source code can be accessed on our GitHub repository at \url{https://github.com/thaihungle/SM}.
Since our model and the baseline share the same RND architecture,
the difference in performance must be attributed to our SM.  We also include
a disagreement-based method  (Disagreement) \cite{pathak2019self}, which leverages the variance of 
multiple dynamic predictors as the intrinsic reward. Finally, as a reference, we use PPO without intrinsic reward as the vanilla Baseline, which is not affected by the noisy TV traps. 

The result demonstrates that the proposed SM 
 outperforms other intrinsic motivators by
a significant margin, as shown in Fig. \ref{fig:ntv_view} (b). Notably, SM improves RND performance by around 25 $\%$,
showcasing the impact of surprise novelty.  
The result also shows that RND+SM outperforms the
vanilla Baseline. Although the improvement is moderate (0.9 vs 0.85),
the result is remarkable since the Noisy-TV is designed to fool intrinsic
motivation methods and among all, only RND+SM can outperform the vanilla
Baseline.

Fig. \ref{fig:ntv_view} (a) illustrates the mean-normalized intrinsic
rewards (MNIR)\footnote{See Appendix \ref{subsec:Intrinsic-Reward-Integration} for more information
on this metric.} measured at different states in our Noisy-TV environment. The first
two states are noises, the following three states are common walls,
and the last two are ones where the agent sees the box. The MNIR bars
show that both models are attracted mainly by the noisy TV, resulting
in the highest MNIRs. However, our model with SM suffers less from
noisy TV distractions since its MNIR is lower than RND's. We speculate
that SM is able to partially reconstruct the white-noise surprise
via the pass-through mechanism, making the normalized surprise novelty
generally smaller than the normalized surprise norm in this case.
That mechanism is enhanced in SM with surprise reconstruction (see
Appendix \ref{subsec:Noisy-TV} for explanation).

On the other hand, when observing red box, RND+SM shows higher MNIR
than RND. The difference between MNIR for common and rare states is
also more prominent in RND+SM than in RND because RND prediction is
not perfect even for common observations, creating relatively significant
surprise norms for seeing walls. The SM fixes that issue by remembering
surprise patterns and successfully retrieving them, producing much
smaller surprise novelty compared to those of rare events like seeing
red box. Consequently, the agent with SM outperforms the other by
a massive margin in task rewards (Fig. \ref{fig:ntv_view} (b)). 

As we visualize the number of watching TV actions and the value of
the intrinsic reward by RND+SM and RND over training time, we realize
that RND+SM helps the agent take fewer watching actions and thus,
collect smaller amounts of intrinsic rewards compared to RND. See more in Appendix \ref{subsec:Noisy-TV}.

\begin{table*}
\begin{centering}
\begin{tabular}{cccccccccc}
\hline 
\multirow{2}{*}{{\normalsize{}Task}} & \multicolumn{1}{c}{{\normalsize{}w/o intrinsic}} & \multicolumn{2}{c}{{\normalsize{}RND}} & \multicolumn{2}{c}{{\normalsize{}ICM}} & \multicolumn{2}{c}{{\normalsize{}NGU}} & \multicolumn{2}{c}{{\normalsize{}AE}}\tabularnewline
 & {\normalsize{}reward} & {\normalsize{}w/o SM } & {\normalsize{}w/ SM} & {\normalsize{}w/o SM } & {\normalsize{}w/ SM} & {\normalsize{}w/o SM } & {\normalsize{}w/ SM} & {\normalsize{}w/o SM } & {\normalsize{}w/ SM}\tabularnewline
\hline 
{\normalsize{}KD} & {\normalsize{}0.0$\pm$0.0} & {\normalsize{}48.3$\pm$26} & \emph{\normalsize{}79.3$\pm$4} & {\normalsize{}5.9$\pm$5} & {\normalsize{}4.7$\pm$3} & {\normalsize{}64.4}\emph{\normalsize{}$\pm$}{\normalsize{}3} & \emph{\normalsize{}83.4$\pm$4} & {\normalsize{}1.4$\pm$1} & \textbf{\emph{\normalsize{}91.2$\pm$6}}\tabularnewline
{\normalsize{}DO} & {\normalsize{}-27.0$\pm$0.7} & {\normalsize{}-13.6$\pm$8} & \textbf{\emph{\normalsize{}70.8$\pm$11}} & {\normalsize{}-27.7$\pm$2} & \emph{\normalsize{}43.6$\pm$16} & {\normalsize{}-23.9$\pm$3} & \emph{\normalsize{}48.6$\pm$28} & {\normalsize{}-5.1$\pm$2} & \emph{\normalsize{}67.5$\pm$13}\tabularnewline
{\normalsize{}LC} & {\normalsize{}78.0$\pm$1.7} & {\normalsize{}25.0$\pm$35} & \emph{\normalsize{}71.1$\pm$5} & {\normalsize{}56.2$\pm$40} & \textbf{\emph{\normalsize{}84.6$\pm$1}} & {\normalsize{}42.2$\pm$40} & \emph{\normalsize{}69.5$\pm$5} & {\normalsize{}29.0$\pm$6/} & \emph{\normalsize{}70.9$\pm$2}\tabularnewline
\hline 
\end{tabular}
\par\end{centering}
~

\caption{MiniGrid: test performance after 10 million training steps. The numbers
are average task return$\times100$ over 128 episodes (mean$\pm$std.
over 5 runs). Bold denotes the best results on each task. Italic denotes
that SG+SM is better than SG regarding Cohen effect size less than
0.5. \label{tab:Minigrid:-test-performance}}
\end{table*}

\begin{figure*}
\begin{centering}
\includegraphics[width=0.8\linewidth]{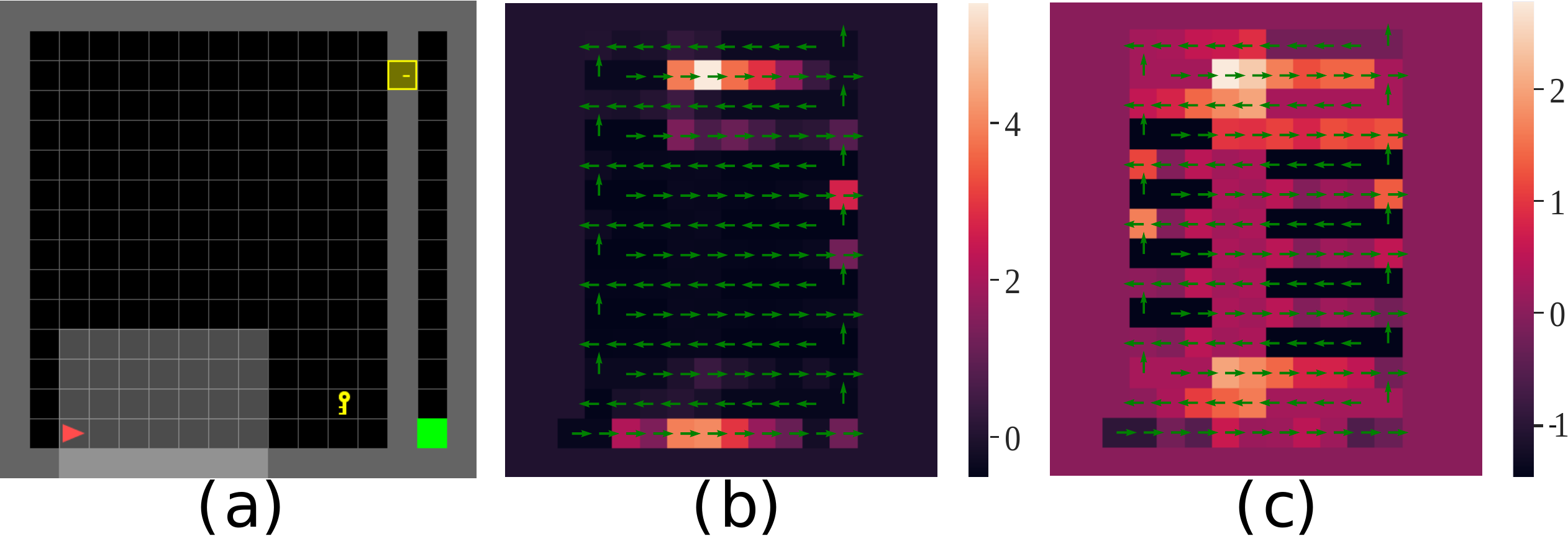}
\par\end{centering}
\caption{Key-Door: (a) Example map in Key-Door where the light window is the
agent's view window (state). MNIR produced for each cell in a manually
created trajectory for RND+SM (b) and RND (c). The green arrows denote
the agent's direction at each location. The brighter the cell, the
higher MNIR assigned to the corresponding state. \label{fig:kd_map}}
\end{figure*}

\subsection{MiniGrid: Compatibility with Different Surprise Generators\label{subsec:MiniGrid:-Compatibility-with}}

We show the versatility of our framework SG+SM by applying SM to 4
SG backbones: RND \cite{burda2018exploration}, ICM \cite{pathak2017curiosity},
NGU \cite{badia2019never} and autoencoder-AE (see Appendix \ref{subsec:MiniGrida}
for implementation details). We test the models on three tasks from
MiniGrid environments: Key-Door (KD), Dynamic-Obstacles (DO) and Lava-Crossing
(LC) \cite{gym_minigrid}. If the agent reaches the goal in the tasks,
it receives a +1 reward. Otherwise, it can be punished with negative
rewards if it collides with obstacles or takes too much time to finish
the task. These environments are not as stochastic as the Noisy-TV but
they still contain other types of distraction. For example, in KD,
the agent can be attracted to irrelevant actions such as going around
to drop and pick the key. In DO, instead of going to the destination,
the agent may chase obstacle balls flying around the map. In LC the
agent can commit unsafe actions like going near lava areas, which
are different from typical paths. In any case, due to reward sparsity,
intrinsic motivation is beneficial. However, surprise alone may not
be enough to guide an efficient exploration since the observation
can be too complicated for SG to minimize its prediction error. Thus,
the agent quickly feels surprised, even in unimportant states.

Table \ref{tab:Minigrid:-test-performance} shows the average returns
of the models for three tasks. The Baseline is the PPO backbone trained
without intrinsic reward. RND, ICM, NGU and AE are SGs providing the
PPO with surprise-norm rewards while our method SG+SM uses surprise-novelty
rewards. The results demonstrate that models with SM often outperform
SG significantly and always contain the best performers. Notably,
in the LC task, SGs hinder the performance of the Baseline because
the agents are attracted to dangerous vivid states, which are hard
to predict but cause the agent's death. The SM models avoid this issue
and outperform the Baseline for the case of ICM+SM. Compared to AE,
which computes intrinsic reward based on the novelty of the state,
AE+SM shows a much higher average score in all tasks. That manifests
the importance of modeling the novelty of surprise instead of states. 

Regarding the compatibility of our SM with different SGs, we realize that if the SG is strong, SG+SM tends to have better performance. For example, in LC task, ICM is the best SG, resulting ICM+SM being the best performer. The exception is the AE SG in KD task. We speculate that AE generates intrinsic rewards as the reconstruction error, which can be more sensitive to the state representation of specific tasks. In KD, the viewport of the agent is almost empty. The state representations look similar most of the time, leading to similar AE's reconstruction errors (surprise norm), even for special events (pick the key). Therefore, AE performance is just slightly above the no-IR baseline. The good thing is that other noisy states like throwing the key also receive low IR. When equipped with SM, AE+SM differentiates the reconstruction residual vectors, which can vary even when they have similar norms (Appendix Fig. 8). Therefore, AE+SM can assign high IR to special events while still providing low IR to noisy states, which is optimal in this case.

To analyze the difference between the SG+SM and SG's MNIR structure,
we visualize the MNIR for each cell in the map of Key-Door in
Figs. \ref{fig:kd_map} (b) and (c). We create a synthetic trajectory
that scans through all the cells in the big room on the left and,
at each cell, uses RND+SM and RND models to compute the corresponding
surprise-norm and surprise-novelty MNIRs, respectively. As shown in
Fig. \ref{fig:kd_map} (b), RND+SM selectively identifies truly surprising
events, where only a few cells have high surprise-novelty MNIR. Here,
we can visually detect three important events that receive the most
MNIR: seeing the key (bottom row), seeing the door side (in the middle
of the rightmost column) and approaching the front of the door (the
second and fourth rows). Other less important cells are assigned very
low MNIR. On the contrary, RND often gives high surprise-norm MNIR
to cells around important ones, which creates a noisy MNIR map as
in Fig. \ref{fig:kd_map} (c). As a result, RND's performance is better
than Baseline, yet far from that of RND+SM. Another analysis of how
surprise novelty discriminates against surprises with similar norms
is given in Appendix Fig. \ref{tnse-1}.

\begin{table*}
\begin{centering}
{\normalsize{}}%
\begin{tabular}{ccccc|cccc}
\hline 
\multirow{1}{*}{{\normalsize{}Task}} & {\normalsize{}EMI$^{\spadesuit}$} & \multirow{1}{*}{{\normalsize{}LWM$^{\spadesuit}$}} & {\normalsize{}RND$^{\spadesuit}$} & {\normalsize{}NGU$^{\diamondsuit}$} & {\normalsize{}LWM$^{\diamondsuit}$} & {\normalsize{}LWM+SM$^{\diamondsuit}$} & {\normalsize{}RND$^{\diamondsuit}$} & {\normalsize{}RND+SM$^{\diamondsuit}$}\tabularnewline
\hline 
{\normalsize{}Freeway} & \textbf{\normalsize{}33.8} & {\normalsize{}30.8} & {\normalsize{}33.3} & {\normalsize{}2.6} & {\normalsize{}31.1} & {\normalsize{}31.6} & {\normalsize{}22.2} & {\normalsize{}22.2}\tabularnewline
{\normalsize{}Frostbite} & {\normalsize{}7002} & {\normalsize{}8409} & {\normalsize{}2227} & {\normalsize{}1751} & {\normalsize{}8598} & \textbf{\emph{\normalsize{}10258}} & {\normalsize{}2628} & \emph{\normalsize{}5073}\tabularnewline
{\normalsize{}Venture} & {\normalsize{}646} & {\normalsize{}998} & {\normalsize{}707} & {\normalsize{}790} & {\normalsize{}985} & \textbf{\emph{\normalsize{}1381}} & {\normalsize{}1081} & \emph{\normalsize{}1119}\tabularnewline
{\normalsize{}Gravitar} & {\normalsize{}558} & {\normalsize{}1376} & {\normalsize{}546} & {\normalsize{}839} & {\normalsize{}1,242} & \textbf{\emph{\normalsize{}1693}} & {\normalsize{}739} & \emph{\normalsize{}987}\tabularnewline
{\normalsize{}Solaris} & {\textbf{\normalsize{}2688}} & {\normalsize{}1268} & {\normalsize{}2051} & {\normalsize{}1103} & {\normalsize{}1839} & \emph{\normalsize{}2065} & {\normalsize{}2206} & \emph{\normalsize{}2420}\tabularnewline
{\normalsize{}Montezuma} & {\normalsize{}387} & {\normalsize{}2276} & {\normalsize{}377} & {\normalsize{}2062} & {\normalsize{}2192} & {\normalsize{}2269} & {\normalsize{}2475} & \textbf{\emph{\normalsize{}5187}}\tabularnewline
\hline 
{\normalsize{}Norm. Mean} & {\normalsize{}61.4} & {\normalsize{}80.6} & {\normalsize{}42.2} & {\normalsize{}31.2} & {\normalsize{}80.5} & \textbf{\emph{\normalsize{}97.0}} & {\normalsize{}50.7} & \emph{\normalsize{}74.8}\tabularnewline
{\normalsize{}Norm. Median} & {\normalsize{}34.9} & {\normalsize{}60.8} & {\normalsize{}32.7} & {\normalsize{}33.1} & {\normalsize{}66.5} & \emph{\normalsize{}83.7} & {\normalsize{}58.3} & \textbf{\emph{\normalsize{}84.6}}\tabularnewline
\hline 
\end{tabular}{\normalsize\par}
\par\end{centering}
~

\caption{Atari: average return over 128 episodes after 50 million training
frames (mean over 5 runs). $\spadesuit$ is from a prior work \cite{ermolov2020latent}.
$\diamondsuit$ is our run. The last two rows are mean and median
human normalized scores. Bold denotes best results. Italic denotes
that SG+SM is significantly better than SG regarding Cohen effect
size less than 0.5. \label{tab:atari6}}
\end{table*}

\subsection{Atari: Sample-efficient Benchmark }

We adopt the sample-efficiency Atari benchmark \cite{kim2019emi}
on six hard exploration games where the training budget is only 50
million frames. We use our SM to augment 2 SGs: RND \cite{burda2018exploration}
and LWM \cite{ermolov2020latent}. Unlike RND, LWM uses a recurrent
world model and forward dynamics to generate surprises. Details of
the SGs, training and evaluation are in Appendix \ref{subsec:aAtari}.
We run the SG and SG+SM in the same codebase and setting. Table \ref{tab:atari6}
reports our and representative results from prior works, showing SM-augmented
models outperform their SG counterparts in all games (same codebase).
In Frostbite and Montezuma Revenge, RND+SM's score is almost twice
as many as that of RND. For LWM+SM, games such as Gravitar and Venture
observe more than 40\% improvement. Overall, LWM+SM and RND+SM achieve
the best mean and median human normalized score, improving 16\% and
22\% w.r.t the best SGs, respectively. Notably, RND+SM shows significant
improvement for the notorious Montezuma Revenge. We also test the
episodic memory baseline NGU to verify whether episodic memory on
the state level is good enough in such a challenging benchmark. The result
shows that NGU does not perform well within 50 million training frames,
resulting in human normalized scores much lower than our LWM+SM and
RND+SM. 

We also verify the benefit of the SM in the long run for Montezuma
Revenge and Frostbite. As shown in Fig. \ref{fig:atari200} (a,b),
RND+SM still significantly outperforms RND after 200 million training
frames, achieving average scores of 10,000 and 9,000, respectively.
The result demonstrates the scalability of our proposed method. When
using RND and RND+SM to compute the average MNIR in several rooms
in Montezuma Revenge (Fig. \ref{fig:Montezuma-Revenge:-surprise}),
we realize that SM makes MNIR higher for surprising events in rooms
with complex structures while depressing the MNIR of fake surprises
in dark rooms. Here, even in the dark room, the movement of agents
(human or spider) is hard to predict, leading to a high average MNIR.
On the contrary, the average MNIR of surprise novelty is reduced if
the prediction error can be recalled from the memory. 

Finally, measuring the running time of the models, we notice little
computing overhead caused by our SM. On our Nvidia A100 GPUs, LWM
and LWM+SM's average time for one 50M training are 11h 38m and 12h
10m, respectively. For one 200M training, RND and RND+SM's average
times are 26h 24m and 28h 1m, respectively. These correspond to only
7\% more training time while the performance gap is significant (4000
scores).

\begin{figure*}
\begin{centering}
\includegraphics[width=1\linewidth]{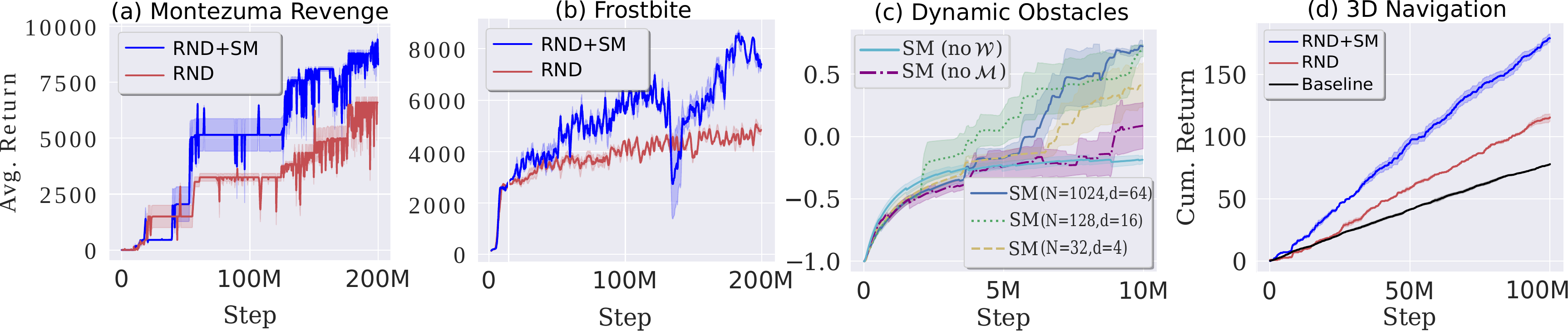}
\par\end{centering}
\caption{(a,b) Atari long runs over 200 million frames: average return over
128 episodes. (c) Ablation study on SM's components. (d) MiniWorld
exploration without task reward: Cumulative task returns over 100
million training steps for the hard setting. The learning curves are
mean$\pm$std. over 5 runs. \label{fig:atari200}}
\end{figure*}

\subsection{Ablation Study\label{subsec:Ablation-Study}}

\textbf{Role of Memories} Here, we use Minigrid's Dynamic-Obstacle
task to study the role of $\mathcal{M}$ and $\text{\ensuremath{\mathcal{W}}}$
in the SM (built upon RND as the SG). Disabling $\mathcal{W}$, we
directly use $\left\Vert q_{t}\right\Vert =\left\Vert \left[u_{t}^{e},u_{t}\right]\right\Vert $
as the intrinsic reward, and name this version: SM (no $\mathcal{W}$).
To ablate the effect of $\mathcal{M}$, we remove $u_{t}^{e}$ from
$q_{t}$ and only use $q_{t}=u_{t}$ as the query to $\mathcal{W}$,
forming the version: SM (no $\mathcal{M}$). We also consider different
episodic memory capacities and slot sizes $N$-$d$$=\left\{ 32-4,128-16,1024-64\right\} $.
As $N$ and $d$ increase, the short-term context expands and more
past surprise information is considered in the attention. In theory,
a big $\mathcal{M}$ is helpful to capture long-term and more accurate
context for constructing the surprise query. 

Fig. \ref{fig:atari200} (c) depicts the performance curves of the
methods after 10 million training steps. SM (no $\mathcal{W}$) and
SM (no $\mathcal{M}$) show weak signs of learning, confirming the
necessity of both modules in this task. Increasing $N$-$d$ from
$32-4$ to $1024-64$ improves the final performance. However, $1024-64$
is not significantly better than $128-16$, perhaps because it is
unlikely to have similar surprises that are more than 128 steps apart.
Thus, a larger attention span does not provide a benefit. As a result,
we keep using $N=128$ and $d=16$ in all other experiments for faster
computing. We also verify the necessity of $\mathcal{M}$ and $\mathcal{W}$
in Montezuma Revenge, Frostbite, Venture and illustrate how $\mathcal{M}$ generates lower
MNIR when 2 similar event occurs in the same episode in Key-Door (see
Appendix \ref{subsec:aAblation-study}). 

\textbf{No Task Reward} In this experiment, we remove task rewards
and merely evaluate the agent's ability to explore using intrinsic
rewards. The task is to navigate 3D rooms and get a +1 reward for
picking an object \cite{gym_miniworld}. The state is the agent's
image view, and there is no noise. Without task rewards, it is crucial
to maintain the agent's interest in the unique events of seeing the objects.
In this partially observable environment, surprise-prediction methods
may struggle to explore even without noise due to a lack of information
for good predictions, leading to usually high prediction errors. For
this testbed, we evaluate random exploration agent (Baseline), RND
and RND+SM in 2 settings: 1 room with three objects (easy), and 4
rooms with one object (hard).

To see the difference among the models, we compare the cumulative
task rewards over 100 million steps (see Appendix \ref{subsec:MiniWorlda}
for details). RND is even worse than Baseline in the easy setting
because predicting causes high biases (intrinsic rewards) towards
the unpredictable, hindering exploration if the map is simple. In
contrast, RND+SM uses surprise novelty, generally showing smaller
intrinsic rewards (see Appendix Fig. \ref{fig:norall} (right)). Consequently,
our method consistently demonstrates significant improvements over
other baselines (see Fig. \ref{fig:atari200} (d) for the hard setting).

\section{Related works}

Intrinsic motivation approaches usually give the agent reward bonuses
for visiting novel states to encourage exploration. The bonus is proportional
to the mismatch between the predicted and reality, also known as surprise
\cite{schmidhuber2010formal}. One kind of predictive model is the
dynamics model, wherein the surprise is the error of the models as
predicting the next state given the current state and action \cite{achiam2017surprise,stadie2015incentivizing}.
One critical problem of these approaches is the unwanted bias towards
transitions where the prediction target is a stochastic function of
the inputs, commonly found in partially observable environments. Recent
works focus on improving the features of the predictor's input by
adopting representation learning mechanisms such as inverse dynamics
\cite{pathak2017curiosity}, variational autoencoder, random/pixel
features \cite{burda2018large}, or whitening transform \cite{ermolov2020latent}.
Although better representations may improve the reward bonus, they
cannot completely solve the problem of stochastic dynamics and thus,
fail in extreme cases such as the noisy-TV problem \cite{burda2018exploration}. 

Besides dynamics prediction, several works propose to predict other
quantities as functions of the current state by using autoencoder
\cite{nylend2017data}, episodic memory \cite{savinov2018episodic},
and random network \cite{burda2018exploration}. Burda et al. (2018)
claimed that using a deterministic random target network is beneficial
in overcoming stochasticity issues. Other methods combine this idea
with episodic memory and other techniques, achieving good results
in large-scale experiments \cite{badia2020agent57,badia2019never}.
From an information theory perspective, the notation of surprise can
be linked to information gain or uncertainty, and predictive models
can be treated as parameterized distributions \cite{achiam2017surprise,houthooft2016vime,still2012information}.
Furthermore, to prevent the agent from unpredictable observations,
the reward bonus can be measured by the progress of the model's prediction
\cite{achiam2017surprise,lopes2012exploration,schmidhuber1991curious}
or disagreement through multiple dynamic models \cite{pathak2019self,sekar2020planning}.
However, these methods are complicated and hard to scale, requiring
heavy computing. A different angle to handle stochastic observations
during exploration is surprsie minimization \cite{berseth2020smirl,rhinehart2021intrinsic}.
In this direction, the agents get bigger rewards for seeing more familiar
states. Such a strategy is somewhat opposite to our approach and suitable
for unstable environments where the randomness occurs separately from
the agents' actions.

These earlier works use surprise as an
incentive for exploration and differ from our principle that utilizes
surprise novelty. Also, our work augments these existing works with
a surprise memory module and can be used as a generic plug-in improvement
for surprise-based models. We note that our memory formulation differs
from the memory-based novelty concept using episodic memory \cite{badia2019never},
momentum memory \cite{fang2022image}, or counting \cite{bellemare2016unifying,tang2017exploration}
because our memory operates on the surprise level, not the state level.
Moreover, our utilization of memory to store surprise novelty represents
 a novel approach compared to other memory-based RL
 methods that typically use memory for storing trajectories \cite{le2021model}, policies \cite{le2022learning,le2022neurocoder}, and hyperparameters \cite{le2022episodic}. 
In our work, exploration is discouraged not only in frequently visited
states but also in states whose surprises can be reconstructed using
SM. Our work provides a more general and learnable novelty detection
mechanism, which is more flexible than counting lookup table.

\section{Discussion}

This paper presents Surprise Generator-Surprise Memory (SG+SM) framework
to compute surprise novelty as an intrinsic motivation for the reinforcement
learning agent. Exploring with surprise novelty is beneficial when
there are repeated patterns of surprises or random observations. In Noisy-TV, our SG+SM can harness the agent's
tendency to visit noisy states such as watching random TV channels
while encouraging it to explore rare events with distinctive surprises.
We empirically show that our SM can supplement four surprise-based
SGs to achieve more rewards in fewer training steps in three grid-world
environments. In 3D navigation without external reward, our method
significantly outperforms the baselines. On two strong SGs, our SM
also achieve superior results in hard-exploration Atari games within
50 million training frames. Even in the long run, our method maintains
a clear performance gap from the baselines, as shown in Montezuma
Revenge and Frostbite.

\begin{acks}
This research was partially funded by the Australian Government through the Australian Research Council (ARC). 
Prof Venkatesh is the recipient of an ARC Australian Laureate Fellowship (FL170100006).
\end{acks}
\newpage{}
\bibliographystyle{ACM-Reference-Format} 
\balance
\bibliography{ac}

\newpage{}

\section*{Appendix}

\renewcommand\thesubsection{\Alph{subsection}}

\subsection{$\mathcal{W}$ as Associative Memory\label{subsec:-as-Associative}}

This section will connect the associative memory concept to neural
networks trained with the reconstruction loss as in Eq. \ref{eq:recloss}.
We will show how the neural network ($\mathcal{W}$) stores and retrieves
its data. We will use 1-layer feed-forward neural network $W$ to
simplify the analysis, but the idea can extend to multi-layer feed-forward
neural networks. For simplicity, assuming $W$ is a square matrix,
the objective is to minimize the difference between the input and
the output of $W$:

For simplicity, assuming $W$ is a square matrix, the objective is
to minimize the difference between the input and the output of $W$:

\begin{equation}
\mathcal{L}=\left\Vert Wx-x\right\Vert _{2}^{2}\label{eq:apprec}
\end{equation}
Using gradient descent, we update $W$ as follow,

\begin{align*}
W & \leftarrow W-\alpha\frac{\partial\mathcal{L}}{\partial W}\\
 & \leftarrow W-2\alpha\left(Wx-x\right)x^{\top}\\
 & \leftarrow W-2\alpha Wxx^{\top}+2\alpha xx^{\top}\\
 & \leftarrow W\left(I-2\alpha xx^{\top}\right)+2\alpha xx^{\top}
\end{align*}
where $I$ is the identity matrix, $x$ is the column vector. If a
batch of inputs $\left\{ x_{i}\right\} _{i=1}^{B}$ is used in computing
the loss in Eq. \ref{eq:apprec}, at step $t$, we update $W$ as
follows,

\[
W_{t}=W_{t-1}\left(I-\alpha X_{t}\right)+\alpha X_{t}
\]
where $X_{t}=2\sum_{i=1}^{B}x_{i}x_{i}^{\top}$. From $t=0$, after
$T$ updates, the weight becomes

\begin{align}
W_{T} & =W_{0}\prod_{t=1}^{T}\left(I-\alpha X_{t}\right)-\alpha^{2}\sum_{t=2}^{T}X_{t}X_{t-1}\prod_{k=t+1}^{T}\left(I-\alpha X_{k}\right)\nonumber \\
 & +\alpha\sum_{t=1}^{T}X_{t}\label{eq:wt}
\end{align}

Given the form of $X_{t}$, $X_{t}$ is symmetric positive-definite.
Also, as $\alpha$ is often very small (0\textless$\alpha\ll1$ ),
we can show that $\left\Vert I-\alpha X_{t}\right\Vert <1-\lambda_{min}\left(\alpha X_{t}\right)<1$.
This means as $T\rightarrow\infty$, $\left\Vert W_{0}\prod_{t=1}^{T}\left(I-\alpha X_{t}\right)\right\Vert \rightarrow0$
and thus, $W_{T}\rightarrow\alpha^{2}\sum_{t=2}^{T}X_{t}X_{t-1}\prod_{k=t+1}^{T}\left(I-\alpha X_{k}\right)+\alpha\sum_{t=1}^{T}X_{t}$
independent from the initialization $W_{0}$. Eq. \ref{eq:wt} shows
how the data ($X_{t}$) is integrated into the neural network weight
$W_{t}$. The other components such as $\alpha^{2}\sum_{t=2}^{T}X_{t}X_{t-1}\prod_{k=t+1}^{T}\left(I-\alpha X_{k}\right)$
can be viewed as additional encoding noise. Without these components
(by assuming $\alpha$ is small enough), \\
\begin{align*}
W_{T} & \approx\alpha\sum_{t=1}^{T}X_{t}\\
 & =\alpha\sum_{t=1}^{T}\sum_{i=1}^{B}x_{i,t}x_{i,t}^{\top}
\end{align*}
or equivalently, we have the Hebbian update rule:

\[
W\leftarrow W+x_{i,t}\otimes x_{i,t}
\]
where $W$ can be seen as the memory, $\otimes$ is the outer product
and $x_{i,t}$ is the data or item stored in the memory. This memory
update is the same as that of classical associative memory models
such as Hopfield network  and Correlation Matrix Memory (CMM) .

Given a query $q$, we retrieve the value in $W$ as output of the
neural network:

\begin{align*}
q' & =q^{\top}W\\
 & =q^{\top}R+\alpha\sum_{t=1}^{T}qX_{t}\\
 & =q^{\top}R+2\alpha\sum_{t=1}^{T}\sum_{i=1}^{B}q^{\top}x_{i,t}x_{i,t}^{\top}
\end{align*}
where $R=W_{0}\prod_{t=1}^{T}\left(I-\alpha X_{t}\right)-\alpha^{2}\sum_{t=2}^{T}X_{t}X_{t-1}\prod_{k=t+1}^{T}\left(I-\alpha X_{k}\right)$.
If $q$ is presented to the memory $W$ in the past as some $x_{j}$,
$q'$ can be represented as:

\begin{align*}
q' & =q^{\top}R+2\alpha\sum_{t=1}^{T}\sum_{i=1,i\neq j}^{B}q^{\top}x_{i,t}x_{i,t}^{\top}+2\alpha q^{\top}\left(qq^{\top}\right)\\
 & =\underbrace{q^{\top}R}+\underbrace{2\alpha\sum_{t=1}^{T}\sum_{i=1,i\neq j}^{B}q^{\top}x_{i,t}x_{i,t}^{\top}}+2\alpha\left\Vert q\right\Vert q^{\top}\\
 & \mathrm{\mathrm{\,\,\,\,\,\,\,\,\,noise\,\,\,\,\,\,\,\,\,\,\,\,\,\,\,cross}\,\,talk}
\end{align*}

Assuming that the noise is insignificant thanks to small $\alpha$,
we can retrieve exactly $q$ given that all items in the memory are
orthogonal\footnote{By certain transformation, this condition can be reduced to linear
independence}. As a result, after scaling $q'$ with $1/2\alpha$, the retrieval
error ($\left\Vert \frac{q'}{2\alpha}-q\right\Vert $) is 0. If $q$
is new to $W$, the error will depend on whether the items stored
in $W$ are close to $q$. Usually, the higher the error, the more
novel $q$ is w.r.t $W$. 

\subsection{SM's Implementation Detail\label{subsec:Implementation-Details}}

In practice, the short-term memory $\mathcal{M}$ is a tensor of shape
$\left[B,N,d\right]$ where $B$ is the number of actors, $N$ the
memory length and $d$ the slot size. $B$ is the SG's hyperparameters
and tuned depending on tasks based on SG performance. For example,
for the Noisy-TV, we tune RND as the SG, obtaining $B=64$ and directly
using them for $\mathcal{M}$. $N$ and $d$ are the special hyperparameters
of our method. As mentioned in Sec. \ref{subsec:Ablation-Study},
we fix $N=128$ and $d=16$ in all experiments. As $B$ increases
in large-scale experiments, memory storage for $\mathcal{M}$ can
be demanding. To overcome this issue, we can use the uniform writing
trick to optimally preserve information while reducing $N$ \cite{le2019learning}. 

Also, for $\mathcal{W}$, by using a small hidden size, we can reduce
the requirement for physical memory significantly. Practically, in
all experiments, we implement $\mathcal{W}$ as a 2-layer feed-forward
neural network with a hidden size of 32 ($2n\rightarrow32\rightarrow2n$).
The activation is $\mathrm{tanh}$. With $n=512$ $d=16$, the number
of parameters of $\mathcal{W}$ is only about 65K. Also, $Q\in\mathbb{R}^{n\times d}$
and $V\in\mathbb{R}^{d\times n}$ have about 8K parameters. In total,
our SM only introduces less than 90K trainable parameters, which are
marginal to that of the SG and policy/value networks (up to 10 million
parameters).

\subsection{Intrinsic Reward Normalization\label{subsec:Intrinsic-Reward-Integration}}

Following \cite{burda2018exploration}, to make the intrinsic reward
on a consistent scale, we normalized the intrinsic reward by dividing
it by a running estimate of the standard deviations of the intrinsic
returns. This normalized intrinsic reward (NIR) will be used for training.
In addition, there is a hyperparameter named intrinsic reward coefficient
to scale the intrinsic contribution relative to the external reward.
We denote the running mean's standard deviations and intrinsic reward
coefficient as $r_{t}^{std}$ and $\beta$, respectively, in Algo.
\ref{alg:ir_full}. In our experiments, if otherwise stated, $\beta=1$. 

We note that when comparing the intrinsic reward at different states
in the same episode (as in the experiment section), we normalize intrinsic
rewards by subtracting the mean, followed by a division by the standard
deviation of all intrinsic rewards in the episode. Hence, the mean-normalized
intrinsic reward (MNIR) in these experiments is different from the
one used in training and can be negative. We argue that normalizing
with mean and std. of the episode's intrinsic rewards is necessary
to make the comparison reasonable. For example, in an episode, method
A assigns all steps with intrinsic rewards of 200; and method B assigns
novel steps with intrinsic rewards of 1 while others 0. Clearly, method
A treats all steps in the episode equally, and thus, it is equivalent
to giving no motivation for all of the steps in the episode (the learned
policy will not motivate the agent to visit novel states). On the
contrary, method B triggers motivation for novel steps in the episodes
(the learned policy will encourage visits to novel states). Without
normalizing by mean subtraction, it is tempting to conclude that the
relative intrinsic reward of method A for a novel step is higher,
which is technically incorrect.

\subsection{Experimental Details}

\subsubsection{Noisy-TV\label{subsec:Noisy-TV}}

We create the Noisy-TV environment by modifying the Maze environment
(MazeS3Fast-v0) in the MiniWorld library (Apache License) \cite{gym_miniworld}.
The backbone RL algorithm is PPO. We adopt a public code repository
for the implementation of PPO and RND (MIT License)\footnote{\url{https://github.com/jcwleo/random-network-distillation-pytorch}}.
In this environment, the state is an image of the agent's viewport.
The details of architecture and hyperparameters of the backbone and
RND is presented in Table \ref{tab:atari_RND}. Most of the setting
is the same as in the repository. We only tune the number of actors
(32, 128, 1024), mini-batch size (4, 16, 64) and $\epsilon$-clip
(0.1, 0.2, 0.3) to suit our hardware and the task. After tuning with
RND, we use the same setting for our RND+SM. 

Fig. \ref{fig:ntv_all} reports all results for this environment.
Fig. \ref{fig:ntv_all} (a) compares the final intrinsic reward (IR)
generated by RND and RND+SM over training time. Overall, RND's IR
is always higher than RND+SM's, indicating that our method significantly reduces
 the attention of the agent to the noisy TV by assigning less
IR to watching TV. Fig. \ref{fig:ntv_all} (b) compares the number
of noisy actions between two methods where RND+SM consistently shows
fewer watching TV actions. That confirms RND+SM agent is less distracted
by the TV. 

As mentioned in the main text, RND+SM is better at handling noise
than RND. Note that RND aims to predict the transformed states by
minimizing $\left\Vert SG\left(s_{t}\right)-f_{R}(s_{t})\right\Vert $
where $f_{R}$ is a fixed neural network initialized randomly. If
RND can learn the transformation, it can pass through the state,
which is similar to reconstruction in an autoencoder. However, learning
$f_{R}$ can be harder and require more samples than learning an identity
transformation since $f_{R}$ is non-linear and complicated. Hence,
it may be more challenging for RND to pass through the noise than
SM.

Another possible reason lies in the operating space (state vs. surprise).
If we treat white noise as a random variable $X$, a surprise generator
(SG) can at most learn to predict the mean of this variable and compute
the surprise $U=\mathbb{E}\left[X|Y\right]-X$ where $Y$ is a random
factor that affects the training of the surprise generator. The factor
$Y$ makes the SG produce imperfect reconstruction $\mathbb{E}\left[X|Y\right]$\footnote{In this case, the perfect reconstruction is $\mathbb{E}\left[X\right]$}.
Here, SG and SM learn to reconstruct $X$ and $U$, respectively.
We can prove that the variance of each feature dimension in $U$ is
smaller than that of $X$ (see Sec. \ref{subsec:Variance-Inequality}).
Learning an autoencoder on surprise space is more beneficial than
in state space since the data has less variance and thus, it may require
less data points to learn the data distribution.

Fig. \ref{fig:ntv_all} (c) reports the performance of all baselines.
Besides RND and RND+SM, we also include PPO without intrinsic reward
as the vanilla Baseline for reference. In addition, we investigate
a simple implementation of SM using a count-based method to measure
surprise novelty. Concretely, we use SimHash algorithm to count the
number of surprises $c(u_{t})$ in a similar manner as \cite{bellemare2016unifying}
and name the baseline RND+SM (count). The intrinsic reward is then
$\beta/\sqrt{c(u_{t})}$. We tune the hyperparameter $\beta=\left\{ 0.5,1,5\right\} $
and the hash matrix size $k_{h}=\left\{ 32,64,128,256\right\} $ and
use the same normalization and training process to run this baseline.
We report the learning curves of the best variant with $\beta=0.5$
and $k_{h}=128$. For Disagreement model, we use the author codebase\footnote{\url{https://github.com/pathak22/exploration-by-disagreement/tree/master}}, and tune the number of dynamic models $\in \{3,5,7,9\}$.   

The result demonstrates that the proposed SM using
memory-augmented neural networks outperforms the count-based SM by
a significant margin. One possible reason is that the count-based method
cannot handle white noise: it always returns high intrinsic rewards.
In contrast, our SM can somehow reconstruct white noise via pass-through
mechanism and thus reduces the impact of fake surprises on learning.
Also, the proposed SM is more flexible than the count-based counterpart
since it learns to reconstruct from the data rather than using a fixed
counting scheme. The result also shows that RND+SM outperforms the
vanilla Baseline. Although the improvement is moderate (0.9 vs 0.85),
the result is remarkable since the Noisy-TV is designed to fool intrinsic
motivation methods and among all, only RND+SM can outperform the vanilla
Baseline.

\begin{figure*}
\begin{centering}
\includegraphics[width=1\linewidth]{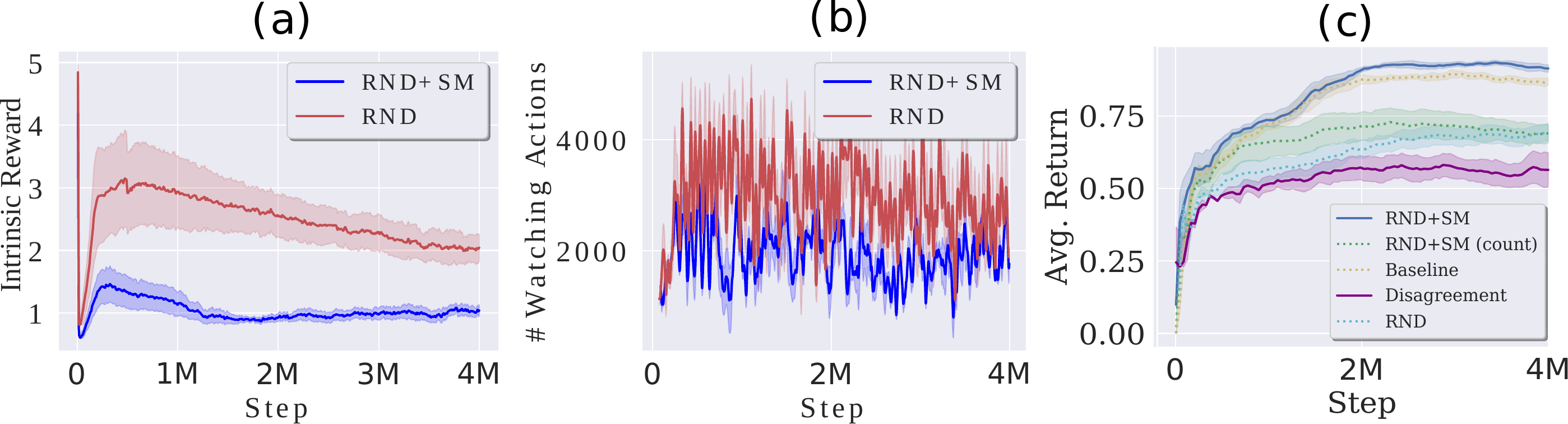}
\par\end{centering}
\caption{Noisy-TV's learning curves over training steps: (a) Average NIR. (b)
Number of watching actions. (c) Average return. All curves are averaged
over 5 runs (mean$\pm$std). \label{fig:ntv_all}}
\end{figure*}

\subsubsection{MiniGrid\label{subsec:MiniGrida}}

The tasks in this experiment are from the MiniGrid library (Apache
License) \cite{gym_minigrid}. In MiniGrid environments, the state
is a description vector representing partial observation information
such as the location of the agents, objects, moving directions, etc.
The three tasks use hardest maps:
\begin{itemize}
\item DoorKey: MiniGrid-DoorKey-16x16-v0 
\item LavaCrossing: MiniGrid-LavaCrossingS11N5-v0 
\item DynamicObstacles: MiniGrid-Dynamic-Obstacles-16x16-v0 
\end{itemize}
The SGs used in this experiment are RND \cite{burda2018exploration},
ICM \cite{pathak2017curiosity}, NGU \cite{badia2019never} and AE.
Below we describe the input-output structure of these SGs.
\begin{itemize}
\item RND: $I_{t}=s_{t}$ and $O_{t}=f_{R}\left(s_{t}\right)$ where $s_{t}$
is the current state and $f_{R}$ is a neural network that has a similar
structure as the prediction network, yet its parameters are initialized
randomly and fixed during training.
\item ICM: $I_{t}=\left(s_{t-1},a_{t}\right)$ and $O_{t}=s_{t}$ where
$s$ is the embedding of the state and $a$ the action. We note that
in addition to the surprise loss (Eq. \ref{eq:surlss}), ICM is trained
with inverse dynamics loss. 
\item NGU: This agent reuses the RND as the SG ($I_{t}=s_{t}$ and $O_{t}=f_{R}\left(s_{t}\right)$)
and combines the surprise norm with an KNN episodic reward. When applying
our SM to NGU, we only take the surprise-based reward as input to
the SM. The code for NGU is based on this public repository \url{https://github.com/opendilab/DI-engine}.
\item AE: $I_{t}=s_{t}$ and $O_{t}=s_{t}$ where $s$ is the embedding
of the state. This SG can be viewed as an associative memory of the
observation, aiming to remember the states. This baseline is designed
to verify the importance of surprise modeling. Despite sharing a similar
architecture, it differs from our SM, which operates on surprise and
have an augmented episodic memory to support reconstruction.
\end{itemize}
The backbone RL algorithm is PPO. The code for PPO and RND is the
same as in Sec. \ref{subsec:Noisy-TV}. We adopt a public code repository
for the implementation of ICM (MIT License)\footnote{\url{https://github.com/jcwleo/curiosity-driven-exploration-pytorch}}.
We implement AE ourselves using a 3-layer feed-forward neural network.
For the SGs, we only tune the number of actors (32, 128, 1024), mini-batch
size (4, 16, 64) and $\epsilon$-clip (0.1, 0.2, 0.3) for the DoorKey
task. We also tune the architecture of the AE (number of layers: 1,2
or 3, activation $\mathrm{tanh}$ or $\mathrm{ReLU}$) on the DoorKey
task. After tuning the SGs, we use the same setting for our SG+SM.
The detailed configurations of the SGs for this experiment are reported
in Table \ref{tab:icm_ae} and Table \ref{tab:atari_RND}. 

The full learning curves of the backbone (Baseline), SG and SG+SM
are given in Fig. \ref{fig:mng_all-1}. To visualize the difference
between surprise and surprise residual vectors, we map these in the
synthetic trajectory to 2-dimensional space using t-SNE projection
in Fig. \ref{tnse-1}. The surprise points show clustered patterns
for high-MNIR states, which confirms our hypothesis that there exist
familiar surprises (they are highly surprising due to high norm, yet
repeated). In contrast, the surprise residual estimated by the SM
has no high-MNIR clusters. The SM transforms clustered surprises to
scatter surprise residuals, resulting in a broader range of MNIR,
thus showing significant discrimination on states that have similar
surprise norm. 

\begin{table}
\begin{centering}
{\footnotesize{}}%
\begin{tabular}{ccc}
\hline 
\multicolumn{1}{c}{{\small{}Hyperparameters}} & {\small{}ICM} & {\small{}AE}\tabularnewline
\hline 
\multicolumn{1}{c}{{\small{}PPO's state}} & {\small{}3-layer feedforward} & {\small{}3-layer feedforward}\tabularnewline
{\small{}encoder} & {\small{}net (Tanh, h=256)} & {\small{}net (Tanh, h=256)}\tabularnewline
\multicolumn{1}{c}{{\small{}SG's surprise}} & {\small{}4-layer feedforward} & {\small{}3-layer feedforward}\tabularnewline
{\small{}predictor} & {\small{}net (ReLU, h=512)} & {\small{}net (Tanh, h=512)}\tabularnewline
Intrisic Coef. $\beta$ & 1 & 1\tabularnewline
{\small{}Num. Actor $B$} & {\small{}64} & {\small{}64}\tabularnewline
{\small{}Minibatch size} & {\small{}64} & {\small{}64}\tabularnewline
{\small{}Horizon $T$} & {\small{}128} & {\small{}128}\tabularnewline
{\small{}Adam Optimizer's lr} & {\small{}$10^{-4}$} & {\small{}$10^{-4}$}\tabularnewline
{\small{}Discount $\gamma$} & {\small{}0.999} & {\small{}0.999}\tabularnewline
{\small{}Intrinsic $\gamma^{i}$} & {\small{}0.99} & {\small{}0.99}\tabularnewline
{\small{}GAE $\lambda$} & {\small{}0.95} & {\small{}0.95}\tabularnewline
{\small{}PPO's clip $\epsilon$} & {\small{}0.2} & {\small{}0.2}\tabularnewline
\hline 
\end{tabular}{\footnotesize\par}
\par\end{centering}
\caption{Hyperparameters of ICM and AE (PPO backbone). \label{tab:icm_ae}}
\end{table}
\begin{figure*}
\begin{centering}
\includegraphics[width=1\linewidth]{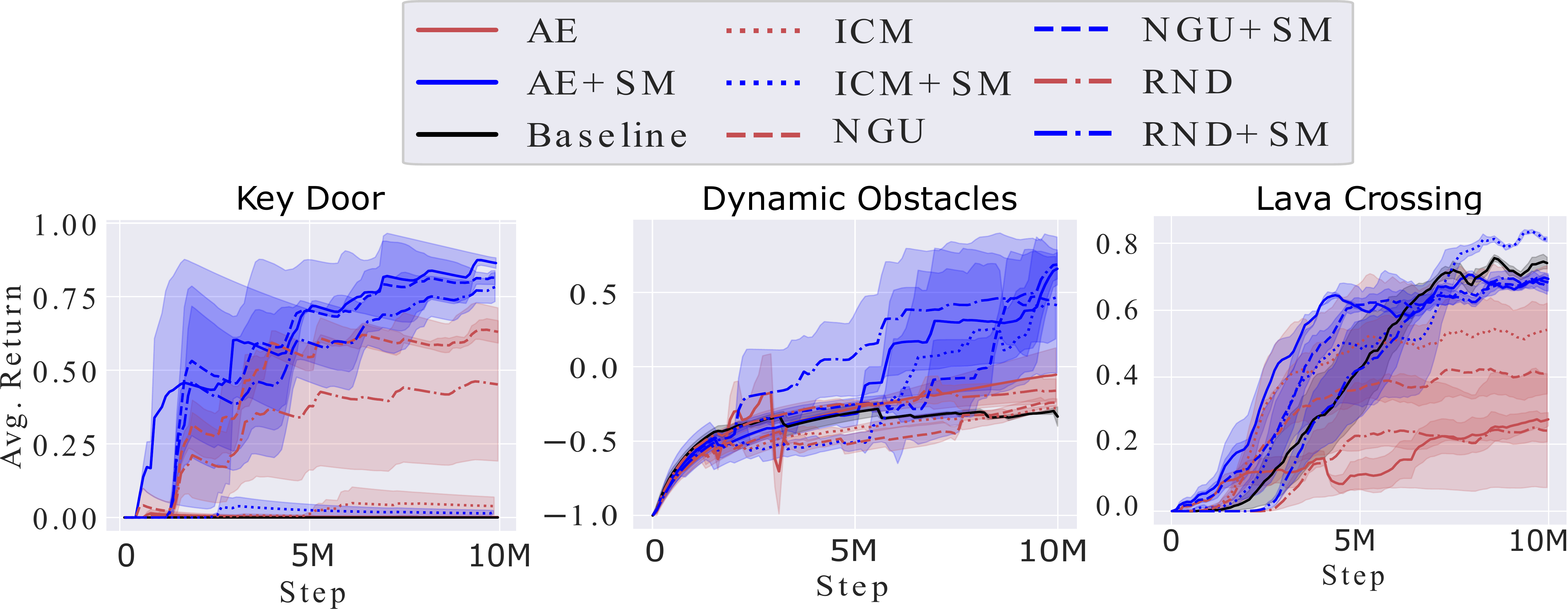}
\par\end{centering}
\caption{Minigrid's learning curves over 10 million training steps (mean$\pm$std.
over 5 runs). \label{fig:mng_all-1}}
\end{figure*}
\begin{figure*}
\begin{centering}
\includegraphics[width=1\linewidth]{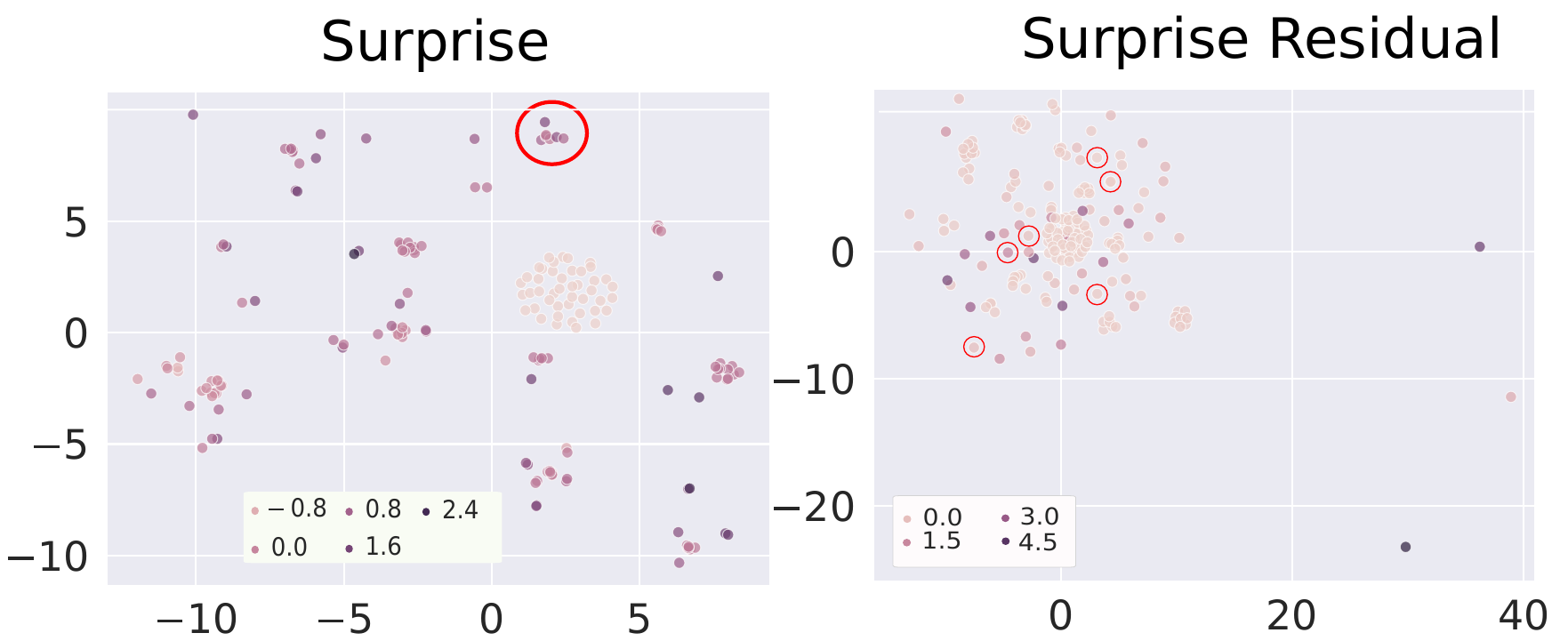}
\par\end{centering}
\caption{Key-Door: t-NSE 2d representations of surprise ($u_{t}$) and surprise
residual ($\hat{q}_{t}-q_{t}$). Each point corresponds to the MNIR
at some step in the episode. Color denotes the MNIR value (darker
means higher MNIR). The red circle on the left picture shows an example
cluster of 6 surprise points. Surprise residuals of these points are
not clustered, as shown in 6 red circles on the right pictures. In
other words, surprise residual can discriminate surprises with similar
norms. \label{tnse-1}}
\end{figure*}

\subsubsection{Atari\label{subsec:aAtari}}

The Atari 2600 Games task involves training an agent to achieve high
game scores. The state is a 2d image representing the screen of the
game.

\begin{table*}
\begin{centering}
{\footnotesize{}}%
\begin{tabular}{cccc}
\hline 
\multicolumn{1}{c}{{\footnotesize{}Hyperparameters}} & {\footnotesize{}MiniGrid} & {\footnotesize{}Noisy-TV+MiniWorld} & {\footnotesize{}Atari}\tabularnewline
\hline 
\multicolumn{1}{c}{{\footnotesize{}PPO's state}} & {\scriptsize{}3-layer feedforward} & {\scriptsize{}3-layer Leaky-ReLU CNN with} & {\scriptsize{}3-layer Leaky-ReLU CNN with}\tabularnewline
{\footnotesize{}encoder} & {\scriptsize{}net (Tanh, h=256)} & {\scriptsize{}kernels } & {\scriptsize{}kernels }\tabularnewline
 &  & {\scriptsize{}$\left\{ 12/32/8/4,32/64/4/2,64/64/3/1\right\} $} & {\scriptsize{}$\left\{ 4/32/8/4,32/64/4/2,64/64/3/1\right\} $}\tabularnewline
 &  & {\scriptsize{}+2-layer feedforward net} & {\scriptsize{}+2-layer feedforward net}\tabularnewline
 &  & {\scriptsize{}(ReLU, h=256)} & {\scriptsize{}(ReLU, h=256)}\tabularnewline
\multicolumn{1}{c}{{\footnotesize{}RND's surprise}} & {\scriptsize{}3-layer feedforward} & {\scriptsize{}3-layer Leaky-ReLU CNN with} & {\scriptsize{}3-layer Leaky-ReLU CNN with}\tabularnewline
{\footnotesize{}predictor} & {\scriptsize{}net (Tanh, h=512)} & {\scriptsize{}kernels } & {\scriptsize{}kernels}\tabularnewline
 &  & {\scriptsize{}$\left\{ 1/32/8/4,32/64/4/2,64/64/3/1\right\} $} & {\scriptsize{}$\left\{ 1/32/8/4,32/64/4/2,64/64/3/1\right\} $}\tabularnewline
 &  & {\scriptsize{}+2-layer feedforward net} & {\scriptsize{}+2-layer feedforward net }\tabularnewline
 &  & {\scriptsize{}(ReLU, h=512)} & {\scriptsize{}(ReLU, h=512)}\tabularnewline
{\footnotesize{}Intrinsic Coef. $\beta$} & {\footnotesize{}1} & {\footnotesize{}1} & {\footnotesize{}1}\tabularnewline
{\footnotesize{}Num. Actor $B$} & {\footnotesize{}64} & {\footnotesize{}64} & {\footnotesize{}128}\tabularnewline
{\footnotesize{}Minibatch size} & {\footnotesize{}64} & {\footnotesize{}64} & {\footnotesize{}4}\tabularnewline
{\footnotesize{}Horizon $T$} & {\footnotesize{}128} & {\footnotesize{}128} & {\footnotesize{}128}\tabularnewline
{\footnotesize{}Adam Optimizer's lr} & {\footnotesize{}$10^{-4}$} & {\footnotesize{}$10^{-4}$} & {\footnotesize{}$10^{-4}$}\tabularnewline
{\footnotesize{}Discount $\gamma$} & {\footnotesize{}0.999} & {\footnotesize{}0.999} & {\footnotesize{}0.999}\tabularnewline
{\footnotesize{}Intrinsic $\gamma^{i}$} & {\footnotesize{}0.99} & {\footnotesize{}0.99} & {\footnotesize{}0.99}\tabularnewline
{\footnotesize{}GAE $\lambda$} & {\footnotesize{}0.95} & {\footnotesize{}0.95} & {\footnotesize{}0.95}\tabularnewline
{\footnotesize{}PPO's clip $\epsilon$} & {\footnotesize{}0.2} & {\footnotesize{}0.1} & {\footnotesize{}0.1}\tabularnewline
\hline 
\end{tabular}{\footnotesize\par}
\par\end{centering}
\caption{Hyperparameters of RND (PPO backbone). \label{tab:atari_RND}}

\end{table*}

\paragraph{SG and RL backbone implementations}

We use 2 SGs: RND and LWM. RND uses a PPO backbone as in previous
sections. On the other hand, LWM uses DQN backbone with CNN-based
encoder and GRU-based value function. The LWM SG uses GRU to model
forward dynamics of the environment and thus its input is: $I_{t}=\left(s_{t-1},a_{t},h_{t-1}\right)$
where $s_{t-1}$ is the embedding of the previous state, $a_{t}$
the current action, and $h_{t-1}$ the hidden state of the world model
GRU. The target $O_{t}$ is the embedding of the current state $s_{t}$. 

RND follows the same implementation as in previous experiments. We
use the public code of LWM provided by the authors\footnote{\url{https://github.com/htdt/lwm}}
to implement LWM. The hyperparameters of RND and LWM are tuned by
the repository's owner (see Table \ref{tab:atari_RND} for RND and
refer to the code or the original paper \cite{ermolov2020latent}
for the details of LWM implementation). We augment them with our SM
of default hyperparameters $N=128,d=16$.

\paragraph*{Training and evaluation}

We follow the standard training for Atari games, such as stacking
four frames and enabling sticky actions. All the environments are
based on OpenAI's gym-atari's NoFrameskip-v4 variants (MIT Liscence)\footnote{\url{https://github.com/openai/gym}}
. After training, we evaluate the models by measuring the average
return over 128 episodes and report the results in Table. \ref{tab:atari6}.
Depending on the setting, the models are trained for 50 or 200 million
frames. 

\paragraph{Results }

Fig. \ref{fig:atari50} demonstrates the learning curves of all models
in 6 Atari games under the low-sample regime. LWM+SM and RND+SM clearly
outperfrom LWM and RND in Frostbite, Venture, Gravitar, Solaris and
Frostbite, Venture, Gravitar and MontezumaRevenge, respectively. Table
\ref{tab:atari6-1} reports the results of more baselines.

\begin{figure*}
\begin{centering}
\includegraphics[width=1\linewidth]{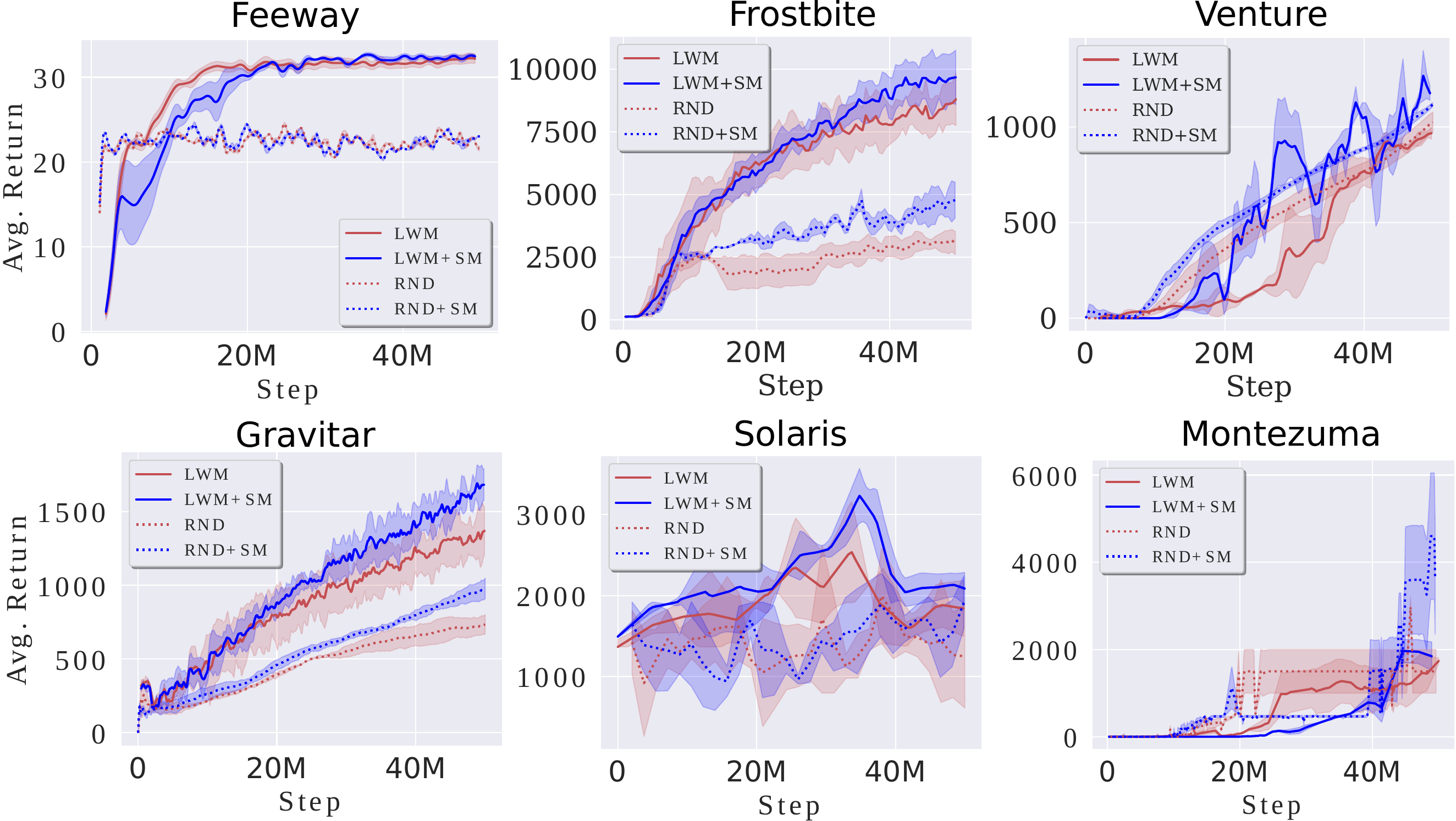}
\par\end{centering}
\caption{Atari low-sample regime: learning curves over 50 million frames (mean$\pm$std.
over 5 runs). To aid visualization, we smooth the curves by taking
average  over a window sized 50. \label{fig:atari50}}
\end{figure*}
\begin{table*}
\begin{centering}
{\footnotesize{}}%
\begin{tabular}{ccccccc|cccc}
\hline 
\multirow{1}{*}{{\scriptsize{}Task}} & {\scriptsize{}EMI$^{\spadesuit}$} & {\scriptsize{}EX2$^{\spadesuit}$} & {\scriptsize{}ICM$^{\spadesuit}$} & {\scriptsize{}AE-SH$^{\spadesuit}$} & \multirow{1}{*}{{\scriptsize{}LWM$^{\spadesuit}$}} & {\scriptsize{}RND$^{\spadesuit}$} & {\scriptsize{}LWM$^{\diamondsuit}$} & {\scriptsize{}LWM+SM$^{\diamondsuit}$} & {\scriptsize{}RND$^{\diamondsuit}$} & {\scriptsize{}RND+SM$^{\diamondsuit}$}\tabularnewline
\hline 
{\scriptsize{}Freeway} & \textbf{\scriptsize{}33.8} & {\scriptsize{}27.1} & {\scriptsize{}33.6} & {\scriptsize{}33.5} & {\scriptsize{}30.8} & {\scriptsize{}33.3} & {\scriptsize{}31.1} & {\scriptsize{}31.6} & {\scriptsize{}22.2} & {\scriptsize{}22.2}\tabularnewline
{\scriptsize{}Frostbite} & {\scriptsize{}7,002} & {\scriptsize{}3,387} & {\scriptsize{}4,465} & {\scriptsize{}5,214} & {\scriptsize{}8,409} & {\scriptsize{}2,227} & {\scriptsize{}8,598} & \textbf{\emph{\scriptsize{}10,258}} & {\scriptsize{}2,628} & \emph{\scriptsize{}5,073}\tabularnewline
{\scriptsize{}Venture} & {\scriptsize{}646} & {\scriptsize{}589} & {\scriptsize{}418} & {\scriptsize{}445} & {\scriptsize{}998} & {\scriptsize{}707} & {\scriptsize{}985} & \textbf{\emph{\scriptsize{}1,381}} & {\scriptsize{}1,081} & \emph{\scriptsize{}1,119}\tabularnewline
{\scriptsize{}Gravitar} & {\scriptsize{}558} & {\scriptsize{}550} & {\scriptsize{}424} & {\scriptsize{}482} & {\scriptsize{}1,376} & {\scriptsize{}546} & {\scriptsize{}1,242} & \textbf{\emph{\scriptsize{}1,693}} & {\scriptsize{}739} & \emph{\scriptsize{}987}\tabularnewline
{\scriptsize{}Solaris} & {\scriptsize{}2,688} & {\scriptsize{}2,276} & {\scriptsize{}2,453} & \textbf{\scriptsize{}4,467} & {\scriptsize{}1,268} & {\scriptsize{}2,051} & {\scriptsize{}1,839} & \emph{\scriptsize{}2,065} & {\scriptsize{}2,206} & \emph{\scriptsize{}2,420}\tabularnewline
{\scriptsize{}Montezuma} & {\scriptsize{}387} & {\scriptsize{}0} & {\scriptsize{}161} & {\scriptsize{}75} & {\scriptsize{}2,276} & {\scriptsize{}377} & {\scriptsize{}2,192} & {\scriptsize{}2,269} & {\scriptsize{}2,475} & \textbf{\emph{\scriptsize{}5,187}}\tabularnewline
\hline 
{\scriptsize{}Norm. Mean} & {\scriptsize{}61.4} & {\scriptsize{}40.5} & {\scriptsize{}46.1} & {\scriptsize{}52.4} & {\scriptsize{}80.6} & {\scriptsize{}42.2} & {\scriptsize{}80.5} & \textbf{\emph{\scriptsize{}97.0}} & {\scriptsize{}50.7} & \emph{\scriptsize{}74.8}\tabularnewline
{\scriptsize{}Norm. Median} & {\scriptsize{}34.9} & {\scriptsize{}32.3} & {\scriptsize{}23.1} & {\scriptsize{}33.3} & {\scriptsize{}60.8} & {\scriptsize{}32.7} & {\scriptsize{}66.5} & \emph{\scriptsize{}83.7} & {\scriptsize{}58.3} & \textbf{\emph{\scriptsize{}84.6}}\tabularnewline
\hline 
\end{tabular}{\footnotesize\par}
\par\end{centering}
~

\caption{Atari: test performance after 50 million training frames (mean over
5 runs). $\spadesuit$ is from a prior work \cite{ermolov2020latent}.
$\diamondsuit$ is our run. The last two rows are mean and median
human normalized scores. Bold denotes best results. Italic denotes
that SG+SM is significantly better than SG regarding Cohen effect
size less than 0.5. \label{tab:atari6-1}}
\end{table*}

\subsubsection{Ablation study\label{subsec:aAblation-study}}

\paragraph{Role of Memories}

We conduct more ablation studies to verify the need for the short
$\mathcal{M}$ and long-term ($\mathcal{W}$) memory in our SM. We
design additional baselines SM (no $\mathcal{W}$) and SM (no $\mathcal{M}$)
(see Sec. \ref{subsec:Ablation-Study}), and compare them with the
SM with full features in Montezuma Revenge and Frostbite task. Fig.
\ref{fig:mr_abl} (a) shows that only SM (full) can reach an average
score of more than 5000 after 50 million training frames. Other ablated
baselines can only achieve around 2000 scores. 

We also shows the impact of the episodic memory in decreasing the
intrinsic rewards for similar states as discussed in Sec. \ref{subsec:Surprise-Memory}.
We select 3 states in the MiniGrid's KeyDoor task and computes the
MNIR for each state, visualized in Fig. \ref{fig:mr_abl-1}. At the
step-1 state, the MNIR is low since there is nothing special in the
view of the agent. At the step-15 state, the agent first sees the
key, and get a high MNIR. At the step-28 state, the agent drops the
key and sees the key again. This event is still more interesting than
the step-1 state. However, the view is similar to the one in step
15, and thus, the MNIR decreases from 0.7 to 0.35 as expected. 

\begin{figure*}
\begin{centering}
\includegraphics[width=1\linewidth]{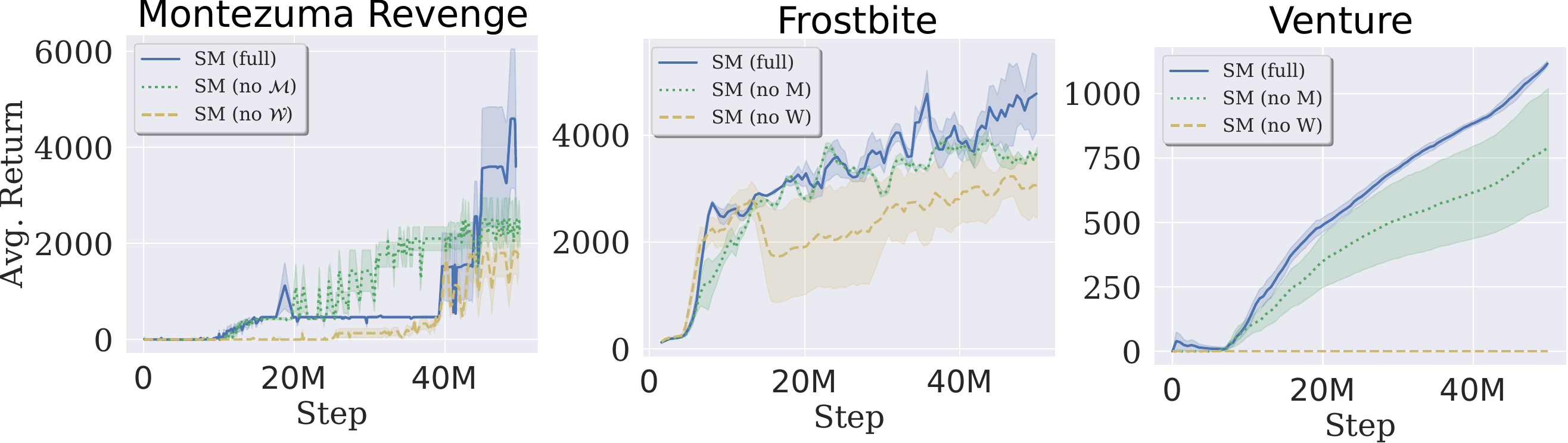}
\par\end{centering}
\caption{Ablation study: average returns (mean$\pm$std.) over 5 runs.\label{fig:mr_abl}}
\end{figure*}
\begin{figure*}
\begin{centering}
\includegraphics[width=1\linewidth]{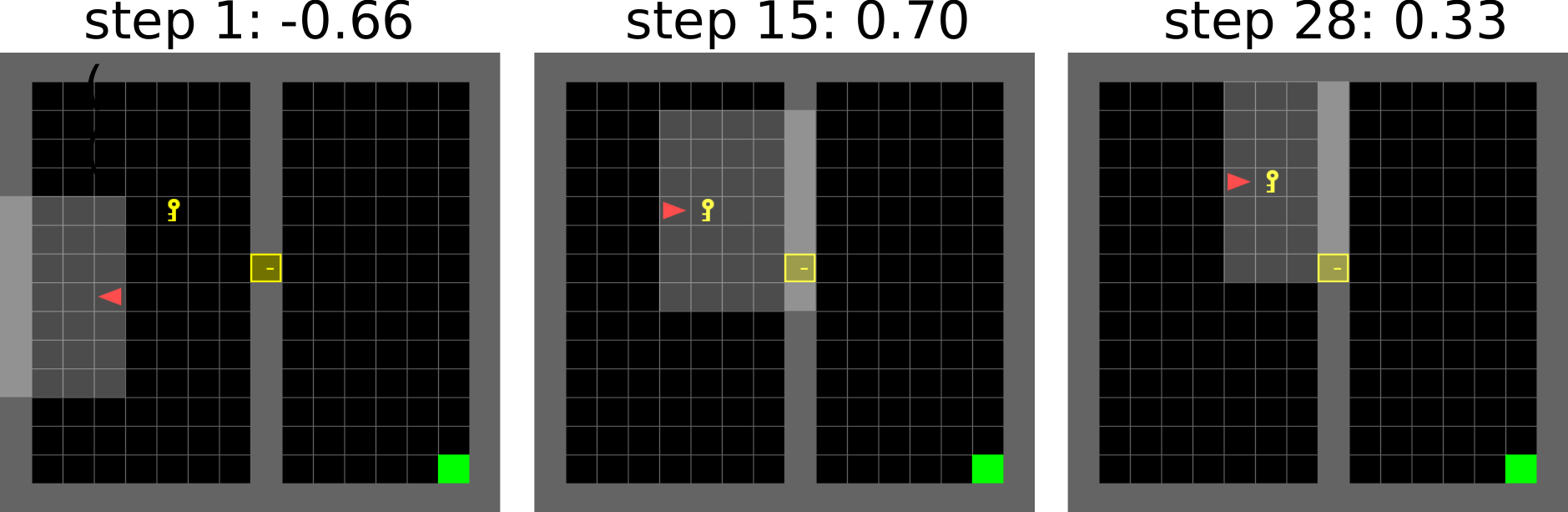}
\par\end{centering}
\caption{MiniGrid's KeyDoor: MNIR of SM at different steps in an episode. \label{fig:mr_abl-1}}
\end{figure*}

\paragraph{No Task Reward\label{subsec:MiniWorlda}}

The tasks in this experiment are from the MiniWorld library (Apache
License) \cite{gym_miniworld}. The two tasks are:
\begin{itemize}
\item Easy: MiniWorld-PickupObjs-v0 
\item Hard: MiniWorld-FourRooms-v0 
\end{itemize}
The backbone and SG are the same as in Sec. \ref{subsec:Noisy-TV}.
We remove the task/external reward in this experiment. For the Baseline,
without task reward, it receives no training signal and thus, showing
a similar behavior as a random agent. Fig. \ref{fig:norall} illustrates
the running average of cumulative task return and the intrinsic reward
over training steps. In the Easy mode, the random Baseline can even
perform better than RND, which indicates that biased intrinsic reward
is not always helpful. RND+SM, in both modes, shows superior performance,
confirming that its intrinsic reward is better to guide the exploration
than that of RND. 

\begin{figure*}
\begin{centering}
\includegraphics[width=1\linewidth]{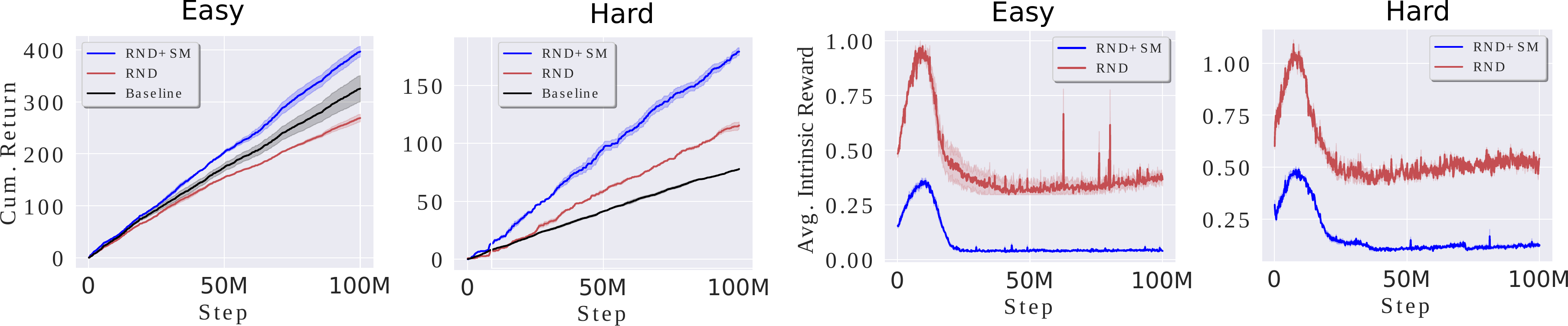}
\par\end{centering}
\caption{MiniWorld: Exploration without task reward. Left: Cumulative task
returns over 100 million training steps for two setting: Easy (1 room,
3 objects) and Hard (4 rooms, 1 object). Right: The average intrinsic
reward over training time. The learning curves are taken average (mean$\pm$std.)
over 5 runs \label{fig:norall}}
\end{figure*}

\subsection{Theoretical Property of Surprise Space's Variance\label{subsec:Variance-Inequality}}

Let $X$ be a random variable representing the observation at some
timestep, a surprise generator (SG) can at most learn to predict the
mean of this variable and compute the surprise $U=\mathbb{E}\left[X|Y\right]-X$
where $Y$ is a random factor that affect the prediction of SG and
makes it produce imperfect reconstruction $\mathbb{E}\left[X|Y\right]$
instead of $\mathbb{E}\left[X\right]$. For instance, in the case
of an autoencoder AE as the SG, $X$ and $U$ are $s_{t}$and $AE(s_{t})-s_{t}$,
respectively.

Let us denote $Z=\mathbb{E}\left(X|Y\right)$, then $\mathbb{E}\left[Z|Y\right]=Z$
and $\mathbb{E}\left[Z^{2}|Y\right]=Z^{2}$. We have 

\begin{align*}
\mathrm{{var}}\left(X\right) & =\mathrm{{var}}(X-Z+Z)\\
 & =\mathrm{{var}}(X-Z)+\mathrm{{var}}(Z)+2\mathrm{{cov}}(X-Z,Z)\\
 & =\mathrm{{var}}(X-Z)+\mathrm{{var}}(Z)+2\mathbb{E}[(X-Z)Z]\\
 & -2\mathbb{E}[X-Z]\mathbb{E}[Z]
\end{align*}
Using the Law of Iterated Expectations, we have

\begin{align*}
\mathbb{E}[X-Z] & =\mathbb{E}[\mathbb{E}[X-Z|Y]]\\
 & =\mathbb{E}[\mathbb{E}\left[X|Y\right]-\mathbb{E}\left[Z|Y\right]]\\
 & =\mathbb{E}\left[Z-Z\right]=0
\end{align*}
and 

\begin{align*}
\mathbb{E}[(X-Z)Z] & =\mathbb{E}[\mathbb{E}[(X-Z)Z|Y]]\\
 & =\mathbb{E}[\mathbb{E}[XZ-Z^{2}|Y]]\\
 & =\mathbb{E}[\mathbb{E}\left(XZ|Y\right)-\mathbb{E}\left(Z^{2}|Y\right)]\\
 & =\mathbb{E}[Z\mathbb{E}\left(X|Y\right)-Z^{2}]\\
 & =\mathbb{E}[Z^{2}-Z^{2}]=0
\end{align*}
Therefore,

\[
\mathrm{{var}}\left(X\right)=\mathrm{{var}}(X-Z)+\mathrm{{var}}(Z)
\]
Let $C_{ii}^{X}$, $C_{ii}^{X-Z}$ and $C_{ii}^{Z}$ denote the diagonal
entries of these covariance matrices, they are the variances of the
components of the random vector $X$, $X-Z$ and $Z$, respectively.
That is, 

\begin{align*}
\left(\sigma_{i}^{X}\right)^{2} & =\left(\sigma_{i}^{X-Z}\right)^{2}+\left(\sigma_{i}^{Z}\right)^{2}\\
\Rightarrow\left(\sigma_{i}^{X}\right)^{2} & \geq\left(\sigma_{i}^{X-Z}\right)^{2}=\left(\sigma_{i}^{U}\right)^{2}
\end{align*}
In our setting, $X$ and $U$ represents observation and surprise
spaces, respectively. Therefore, the variance of each feature dimension
in surprise space is smaller than that of observation space. The equality
is obtained when $\left(\sigma_{i}^{Z}\right)^{2}=0$ or $\mathbb{E}\left(X|Y\right)=\mathbb{E}\left(X\right)$.
That is, the SG's prediction is perfect, which is unlikely to happen
in practice. 

\subsection{Further Discussions\label{subsec:Further-Discussions}}

If we view surprise as the first-order error
between the observation and the predicted, surprise novelty--the
retrieval error between the surprise and the reconstructed memory,
is essentially the second-order error. It would be interesting to
investigate the notion of higher-order errors, study their theoretical
properties, and utilize them for intrinsic motivation in our future
work.

One limitation of our method is that
$\mathcal{M}$ and $\mathcal{W}$ require additional physical memory
(RAM/GPU) than SG methods. Also, a plug-in module like SM introduces
more hyperparameters, such as N and d. Although we find the default
values of N=128 and d=16 work well across all experiments in this
paper, we recommend adjustments if users apply our method to novel
domains. Further discussion on the rationality of our architecture
choice is given in Appendix \ref{subsec:Further-Discussions}.

Our method assumes that surprises have patterns and can be remembered
by our surprise memory. There might exist environments beyond those
studied in this paper where this assumption may not hold, or surprise-based
counterparts already achieve optimal exploration (e.g., perfect SG)
and thus do not need SM for improvement (e.g., Freeway game). We acknowledge that our method can be less effective in case experiencing the same type of surprise has great value, which can happen when the surprise representations are not well defined such that different events share similar surprise representations. This can be avoided by employing strong surprise generators that can learn to generate meaningful surprises.  Our objective is designed to address known shortcomings of maximizing surprise, particularly in environments with stochastic dynamics or partial observability. We believe that our approach offers a different perspective and complements existing exploration methods.

We note that there should be a balance point for the reconstructing
ability of the AE as long as it keeps minimizing the objective in
Eq. 5. This is critical to help our model overfit to reconstructing
rare events. For example, if we use other memory architectures to
store surprise novelty, as we train the agent the AE will get better
to recover that red box surprise at the end, contrarily, the AE probably
cannot learn so well to reconstruct the TV's noise. Thus, it fails
to give high rewards to the rare event (red box). It may not happen
with AE. To be specific, as the agent sees the red box more frequently,
it is true that AE tends to yield a relatively lower intrinsic reward
(IR) for the event, leading to nonoptimal policy. But it only lasts
for a short time. Since the nonoptimal policy will make the agent
visits non-goal states such as watching TV, the AE will be busy learning
to reconstruct these unimportant events. Due to limited capacity,
the AE will soon forget how to recall the red box and get better at
reconstructing other things like walls or passing through noises.
Hence, once again, the IR for seeing the red box will be high, aligning
the agent to the optimal policy.

\end{document}